\documentclass[11pt, twocolumn]{article}

\usepackage[T1]{fontenc}
\usepackage[utf8]{inputenc}
\usepackage{times}
\usepackage{latexsym}
\usepackage{microtype}
\usepackage{inconsolata}

\usepackage[letterpaper,margin=1in]{geometry}

\usepackage[numbers,sort&compress]{natbib}   

\usepackage{authblk}
\renewcommand\Affilfont{\small}        
\makeatletter
\renewcommand\AB@affilsepx{; \protect\Affilfont}  
\makeatother

\usepackage{graphicx}
\usepackage{xcolor}
\usepackage{booktabs}
\usepackage{multirow}
\usepackage{colortbl}
\usepackage{makecell}
\usepackage{tabularx}
\usepackage{array}
\usepackage{amsmath,amssymb,amsfonts,amsthm}
\usepackage{mathtools}
\usepackage{bm,bbm,dsfont,mathrsfs}
\usepackage{algorithm}
\usepackage{algpseudocode}
\usepackage{enumitem}
\usepackage{tikz}
\usetikzlibrary{positioning,arrows.meta,calc,shapes.geometric,fit,backgrounds,decorations.pathmorphing,patterns,shadings}
\usepackage{pifont}
\usepackage{xspace}
\usepackage{adjustbox}
\usepackage{caption}
\usepackage{subcaption}
\usepackage{float}
\usepackage[most]{tcolorbox}
\usepackage{hyperref}

\definecolor{tridentblue}{HTML}{1F4E79}
\definecolor{tridentred}{HTML}{B22222}
\definecolor{tridentgreen}{HTML}{1E6B3A}
\definecolor{tridentamber}{HTML}{C77600}
\definecolor{tablerow}{HTML}{EEF3F8}
\definecolor{tablehead}{HTML}{D6E1EC}
\definecolor{oursrow}{HTML}{FFF1D6}
\definecolor{softgrey}{HTML}{F4F4F4}
\definecolor{lightred}{HTML}{FFE4E4}
\definecolor{lightgreen}{HTML}{E4F7E4}
\definecolor{lightyellow}{HTML}{FFFACD}

\definecolor{propblue}{HTML}{1A237E}
\definecolor{propbg}{HTML}{E8EAF6}
\definecolor{remarkgreen}{HTML}{1B5E20}
\definecolor{remarkbg}{HTML}{E8F5E9}
\definecolor{notebg}{HTML}{FFF8E1}
\definecolor{notecolor}{HTML}{F57F17}
\definecolor{warnred}{HTML}{B71C1C}
\definecolor{warnbg}{HTML}{FFEBEE}

\newtheorem{theorem}{Theorem}

\newtcbtheorem[number within=section]{proposition}{Proposition}{
  colback=propbg,colframe=propblue,fonttitle=\bfseries\small,
  separator sign={.~},coltitle=white,boxrule=0.5pt,
  left=4pt,right=4pt,top=2pt,bottom=2pt
}{prop}
\newtcbtheorem[number within=section]{remark}{Remark}{
  colback=remarkbg,colframe=remarkgreen,fonttitle=\bfseries\small,
  separator sign={.~},coltitle=white,boxrule=0.5pt,
  left=4pt,right=4pt,top=2pt,bottom=2pt
}{rmk}
\newtcbtheorem[number within=section]{designnote}{Design Note}{
  colback=notebg,colframe=notecolor,fonttitle=\bfseries\small,
  separator sign={.~},coltitle=white,boxrule=0.5pt,
  left=4pt,right=4pt,top=2pt,bottom=2pt
}{dn}
\newtcbtheorem[number within=section]{warning}{Failure Mode}{
  colback=warnbg,colframe=warnred,fonttitle=\bfseries\small,
  separator sign={.~},coltitle=white,boxrule=0.5pt,
  left=4pt,right=4pt,top=2pt,bottom=2pt
}{warn}

\setlist[itemize]{leftmargin=*,topsep=2pt,itemsep=1pt}
\setlist[enumerate]{leftmargin=*,topsep=2pt,itemsep=1pt}

\newcommand{\R}{\mathbb{R}}
\newcommand{\E}{\mathbb{E}}

\newcommand{\calN}{\mathcal{N}}
\newcommand{\calM}{\mathcal{M}}\newcommand{\calL}{\mathcal{L}}

\newcommand{\calR}{\mathcal{R}}

\DeclareMathOperator{\softmax}{softmax}

\newcommand{\method}{\textsc{OmniDrive}\xspace}
\newcommand{\director}{\textsc{Architect}\xspace}
\newcommand{\cartographer}{\textsc{Cartographer}\xspace}
\newcommand{\auditor}{\textsc{Auditor}\xspace}
\newcommand{\worldscript}{\textsc{WorldScript}\xspace}
\newcommand{\best}[1]{\textbf{\color{tridentblue}#1}}
\newcommand{\secondbest}[1]{\underline{#1}}

\title{\method: An LLM-Choreographed Multi-Agent World Model with 
Unified Latent Co-Compression for Multi-View Driving Video Generation}

\author[1]{Zijie Meng\thanks{Corresponding author: \texttt{ymlf@stu.pku.edu.cn}}}
\author[2,3]{Yufei Liu}
\author[1]{Chengqian Ma}
\author[1]{Zhiyu Li}
\author[1]{Jiyuan Liu}
\author[4]{Wenhua Nie}
\author[5]{Bingcai Wei}
\author[6]{Shuqin Chen}
\author[1]{Weichen Xu}
\author[1]{Jiquan Yuan}
\author[7,8]{Miao Zhang}

\affil[1]{Peking University}
\affil[2]{Xiamen University}
\affil[3]{Korea Advanced Institute of Science and Technology (KAIST)}
\affil[4]{National Taiwan University}
\affil[5]{Wuhan University}
\affil[6]{Wuhan University of Technology}
\affil[7]{Tsinghua University}
\affil[8]{Jimei University}

\date{}

\begin{document}
\maketitle

\begin{abstract}
Generative world models for autonomous driving face two unresolved tensions: \emph{heterogeneous control injection}---where free-form language, HD-maps, trajectories, and camera poses reside in incompatible representational spaces---and \emph{post-hoc cross-view fusion}, where per-camera latents fail to encode global 3-D geometry.  We trace both to a single root cause: the absence of a shared \emph{symbolic interlingua} aligning language, geometry, and pixels at the latent-token level.  We present \method, an LLM-choreographed multi-agent world model that recasts controllable multi-view video generation as \emph{latent choreography}.  Three Qwen2.5-VL agents---a \director{} parsing user intent into a structured \worldscript{}, a \cartographer{} grounding it into spatially-anchored layout tokens, and an \auditor{} feeding cross-view critiques back as auxiliary supervision---jointly author a single position-aware token sequence.  This sequence is co-compressed with the multi-view video via a view-time permutation that enforces inter-camera geometry within the convolutional receptive field of a 3-D VAE.  On nuScenes, \method sets new state-of-the-art multi-view consistency and BEV mAP ($21.6$) with competitive FVD ($45.7$); a detector trained \emph{purely} on our synthetic data gains $+2.4$ NDS on the real validation split, validating downstream utility.
\end{abstract}

\section{Introduction}
\label{sec:intro}

World models that synthesize sensor-grade driving videos have become indispensable for closed-loop autonomy~\citep{DD1,gaia2,yan2024drivingsphere,yang2024drivearena,zhao2025drivedreamer4d}.  Yet the field is converging on an uncomfortable plateau: although DiT-based diffusers paired with flow matching~\citep{dit,lipman2022flowmatchin,esser2024scaling} now generate minute-long clips at competitive fidelity~\citep{magicdrive2,chen2024unimlvg,wu2024drivescape,guo2025dist4d}, two failure modes persist that are particularly costly for driving.

First, driving simulators must honour heterogeneous conditions of fundamentally different ontologies---\emph{geometric} cues (HD-maps, 3-D boxes, ego-trajectories, camera extrinsics) that are spatially anchored to the pixel grid, and \emph{semantic} cues (free-form prompts, weather, time-of-day, style references) that are globally diffuse.  Existing systems graft a ControlNet-like branch~\citep{magicdrive,wu2024drivescape,DD1} for the former and cross-attention adapters~\citep{magicdrive2,wen2024panacea,jiang2024dive} for the latter, leaving two disconnected control streams to be reconciled \emph{post hoc} by the diffuser.  Second, every method we know of compresses each of the six cameras with an \emph{independent} 3-D VAE~\citep{yang2024cogvideox,kong2024hunyuanvideo,yao2024mygo} and only fuses views \emph{a posteriori} via cross-view attention~\citep{li2024drivingdiffusion,wang2024driving,li2024vivid,meng2026make}.  Camera-local latents are thus chronically blind to global 3-D structure, so cross-view drift, photometric flicker, and object teleportation routinely persist even at competitive FID.  Recent remedies push the symptoms in various directions---pooled per-view codes~\citep{gaia2}, extrinsic or NeRF priors~\citep{yao2024mygo,DD2}, view-aware attention~\citep{chen2024unimlvg,wang2025mila}, adaptive conditioning~\citep{ji2025cogen}, disentangled 4-D diffusion~\citep{guo2025dist4d}---but none offers a single representational space in which \emph{language understanding}, \emph{geometric layout}, and \emph{pixel evidence} are aligned at the token level, which we argue is the prerequisite for genuine controllability.

We address this by reframing controllable driving generation as a \emph{multi-agent latent choreography} problem.  Inspired by recent LLM-orchestrated content-creation pipelines~\citep{li2024anim,song2024directorllm,wu2025omniagent,wang2025genmac,zhao2025drivedreamer2}, we cast three specialized Qwen2.5-VL agents~\citep{bai2025qwen25vl} into filmmaking-inspired roles---an \director{} that converts user prompts into a structured \worldscript{} JSON, a \cartographer{} that renders \worldscript{} into spatially-anchored layout images, and an \auditor{} that returns cross-view critiques as auxiliary supervision.  Crucially, none of these agents merely pre-processes inputs in isolation: their outputs are co-tokenized with the multi-view video latents and bound to the visual grid through a shared positional schema.  This is made possible by a single representational substrate---a \emph{view-time permutation} that flattens the six-camera $\times$ time cube into a pseudo-temporal stream consumed by a 3-D VAE in one pass, applied identically to RGB and to the \cartographer{}'s layout images.  The permutation is metric-preserving, leaves the latent Lipschitz constant invariant, and reduces inter-camera variance under a view-block-aware kernel; consequently, geometric tokens occupy \emph{identical} positional coordinates as the visual tokens they constrain, making \method{} the first system in which symbolic language, rendered geometry, and pixel latents share a fully aligned coordinate frame. Our contributions are:
\begin{itemize}[leftmargin=*,itemsep=2pt,topsep=2pt]
\item We propose \method{} (Fig.~\ref{fig:main}), the first multi-agent driving world model that unifies language, geometry, and pixels in a single position-aware token grid.
\item We introduce \emph{Latent Co-Compression}, a view-time permutation that turns inter-camera 3-D constraints into local convolutional dependencies inside a shared 3-D VAE.
\item We design an LLM-choreographed conditioning sequence authored by three Qwen2.5-VL agents (\director{}, \cartographer{}, \auditor{}), bound to the visual grid at the token level.
\item \method{} sets new SOTA on nuScenes across quality, consistency, and controllability; a detector trained \emph{solely} on our synthetic data improves real-world NDS by $+2.4$.
\end{itemize}

\begin{figure*}[!t]
    \centering
    \includegraphics[width=\textwidth]{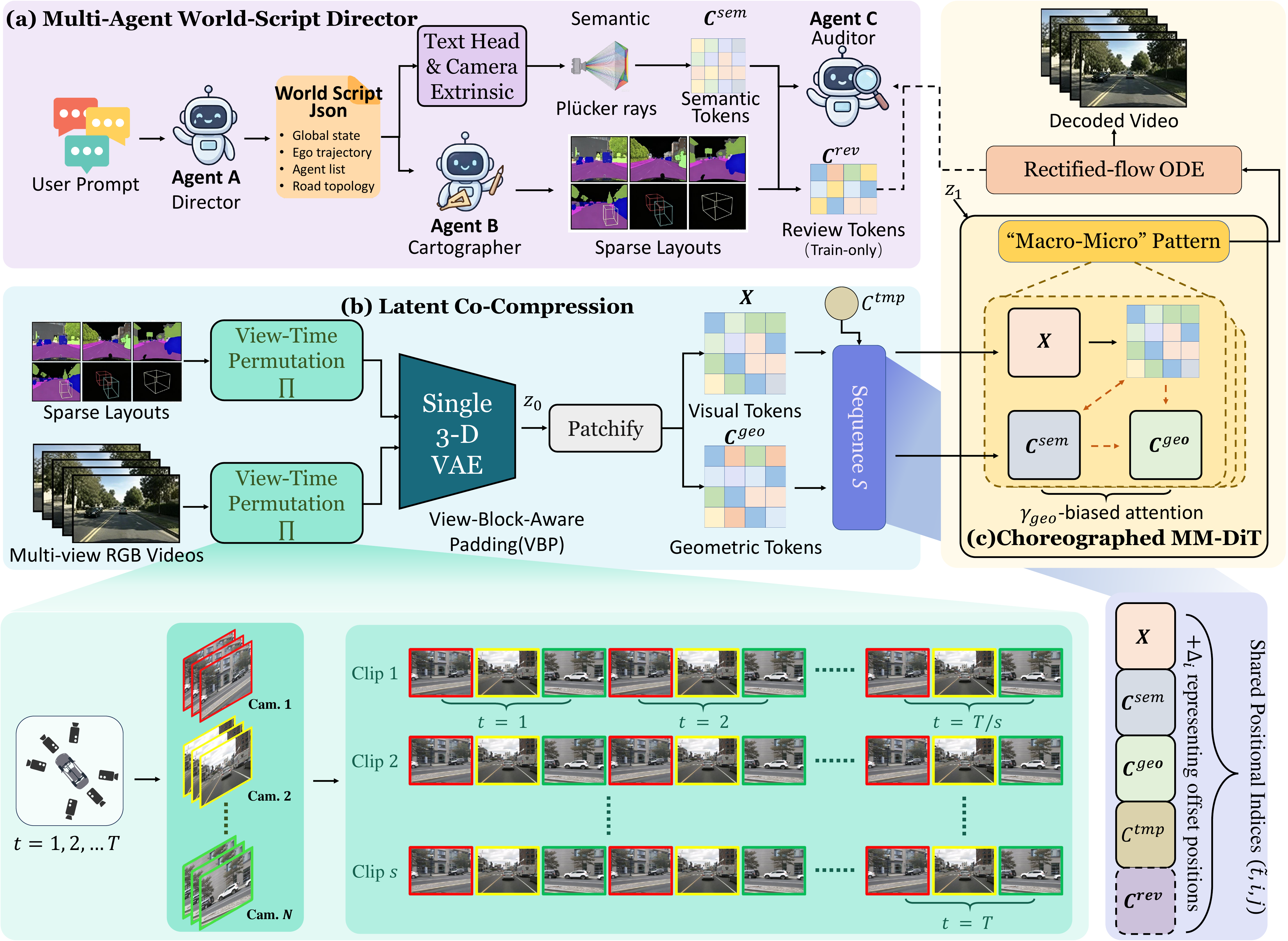}\vspace{-0.25cm}
    \caption{\textbf{Architecture overview of \method{}.} \textbf{(a) Multi-Agent Director:} The \director{} parses prompts into a structured \worldscript{}; the \cartographer{} renders multi-camera layouts; and the \auditor{} generates cross-view critiques. \textbf{(b) Latent Co-Compression:} RGB frames and layouts are jointly encoded via a 3-D VAE with view-block-aware padding. \textbf{(c) Choreographed MM-DiT:} The co-compressed latent is patchified and concatenated with agent-authored semantic, geometric, temporal, and critique streams into a unified token sequence driving the MM-DiT. Bottom strip shows the positional token schema. Generation is resolved using a rectified-flow ODE.}
    \label{fig:main}
    \vspace{-0.45cm}
\end{figure*}

\section{Related Work}
\label{sec:related}

\paragraph{Generative driving world models.}
Diffusion-based simulators split into UNet-coupled families~\citep{yao2024mygo,magicdrive,wu2024drivescape,ma2024unleashing,li2024drivingdiffusion} and DiT-based families~\citep{magicdrive2,jiang2024dive,wen2024panacea,chen2024unimlvg,wang2025mila,guo2025dist4d}; autoregressive variants~\citep{kim2021drivegan,DD1,umgen,mei2024dreamforge,meng2026make} trade quality for explicit control.  All retain per-camera encoding.  Closely related are GAIA-2~\citep{gaia2}, DrivingSphere~\citep{yan2024drivingsphere}, CVD-STORM~\citep{liu2025cvdstorm}, GenieDrive~\citep{wang2025geniedrive} and Genesis~\citep{guo2025genesis}, which still encode each view independently and rely on late attention to align them.

\paragraph{LLM-guided generative pipelines.}
Outside driving, LLMs increasingly serve as \emph{directors} of generation~\citep{song2024directorllm,li2024anim,sandoval2025editduet,zhao2025lvasagent,he2024kubrick,wu2025omniagent,wang2025genmac,zhao2025drivedreamer2}.  These works treat LLMs as prompt rewriters, layout planners, or critics operating in \emph{disjoint} pipelines; agent outputs are rendered to pixels or text that are then consumed by black-box generators~\citep{meng2026decouplingsemanticsdistortionsmultiscale, meng2025orpaint, liu2026omnidirector, meng2026argus, liu2025synpo, wei2025robust, gai20263d}.  We are, to our knowledge, the first to bind agent outputs to a \emph{shared latent token grid} that is mechanically coupled to the visual stream by a position-aware schema.

\paragraph{Unified conditioning and multi-view consistency.}
Generic video models such as Wan~\citep{wan2025}, HunyuanVideo~\citep{kong2024hunyuanvideo}, CogVideoX~\citep{yang2024cogvideox}, LTX-Video~\citep{HaCohen2024LTXVideo}, TokenFlow~\citep{geyer2023tokenflow} and CineMaster~\citep{wang2025cinemaster} have explored omni-conditioning, but cannot honour the pixel-aligned geometric controls demanded by driving.  Closer to ours, multi-view diffusion models~\citep{li2024vivid,deng2023mvd,bai2024syncammaster,wu2025icworld} pool per-view codes, leaving cross-view drift unresolved.  By contrast, our co-compression performs early fusion \emph{inside the encoder}.

\section{Methodology}
\label{sec:method}

\subsection{Preliminaries: A Smooth Manifold for Choreography}
\label{sec:pre}

Let a multi-camera driving clip be $\mathbf{x}=\{x_{n,t}\in\R^{C\times H\times W}\}$ with view index $n\in[1,N]$ and time index $t\in[1,T]$.  A 3-D VAE $E_{\bm\phi}$ maps $\mathbf{x}$ to a latent $\mathbf{z}_0\in\R^{\tilde T'\times H'\times W'\times C_z}$ on a low-curvature manifold $\calM_{\mathbf z}$, regularised by the standard ELBO~\citep{kingma2013auto}.  We adopt the conditional flow-matching~\citep{lipman2022flowmatchin,liu2023rectified,esser2024scaling} formulation with a \emph{rectified} probability path
\begin{equation}
\mathbf{z}_s = (1-s)\,\mathbf{z}_0 + s\,\mathbf{z}_1, \quad s\sim\calN_{\!\![0,1]},
\label{eq:path}
\end{equation}
where $\mathbf{z}_1\!\sim\!\calN(\mathbf 0,\mathbf I)$ is the noise endpoint and $\mathbf{z}_0$ is the clean latent.  The target velocity is $v^\star(\mathbf z_s,s) = \mathbf z_1 - \mathbf z_0$, and the conditional flow-matching loss minimises
\begin{equation}
\mathcal{L}_{\mathrm{CFM}} = \E_{\mathbf z_0,\mathbf z_1,s,\mathbf c}\!\Bigl[\bigl\|v_{\bm\theta}(\mathbf z_s,s,\mathbf c) - (\mathbf z_1{-}\mathbf z_0)\bigr\|_2^{2}\Bigr],
\label{eq:cfm}
\end{equation}
where $\mathbf c$ collects all conditioning tokens authored by the agents (\S\ref{sec:agents}).  Inference integrates the deterministic ODE $d\mathbf z_s/ds = v_{\bm\theta}$ from $s{=}1$ to $s{=}0$ in a few Heun steps.

\subsection{\worldscript{}: A Multi-Agent LLM Choreographer}
\label{sec:agents}

\method{} replaces the customary monolithic ``prompt $\to$ encoder'' pipeline with a tightly-coupled trio of Qwen2.5-VL agents that collectively populate the conditioning sequence.  Crucially, every agent's output lives in the \emph{same} positional grid as the visual latents (\S\ref{sec:cocomp}), so the agents do not merely \emph{precede} generation---they \emph{participate} in it.

\paragraph{Agent~A --- \director.}
Given a free-form user prompt $p_{\text{usr}}$ (and an optional multi-view reference image), the \director{} executes structured information extraction~\citep{xu2025multiagentesc,lin2025creativitymas} into a length-bounded \worldscript{} JSON:
\[
\text{\worldscript{}} = \langle G_{\text{global}},\ \mathcal E_{\text{ego}},\ \{O_i\}_{i=1}^K,\ M_{\text{map}}\rangle,
\]
where $G_{\text{global}}$ encodes weather, time-of-day, density and location-type; $\mathcal E_{\text{ego}}$ contains the ego intent and trajectory keypoints; $\{O_i\}$ lists agent classes, positions, sizes and behaviours; $M_{\text{map}}$ describes road topology.  The \director{} is conditioned with a meta-prompt $\mathbf m$ that fixes JSON keys and value vocabularies (Appendix~A), guaranteeing deterministic token boundaries.  We embed the rendered text with a frozen Qwen2.5-VL text head followed by a linear projection $W_{\text{txt}}$, yielding $\mathbf C^{\text{sem}}\!\in\!\R^{M_{\text{sem}}\times d}$.

\begin{table*}[t]
\centering
\caption{\textbf{Generation fidelity on the nuScenes validation set under matched six-camera evaluation.} $\uparrow$/$\downarrow$ denote higher/lower-is-better. Frame length (Fr) and resolution (Res) are reported alongside. Single-view methods are not duplicated; $^\ast$ marks entries taken at the original paper's setting due to infeasible re-implementation. Best in \best{bold}, second \secondbest{underlined}.}
\vspace{-0.25cm}
\label{tab:nuscenes_comparison}
\small
\renewcommand{\arraystretch}{1.30}
\resizebox{\textwidth}{!}{%
\begin{tabular}{l|cc|cccc|cccc}
\toprule
\rowcolor{tablehead}
\textbf{Model} & \textbf{Fr} & \textbf{Res} & \multicolumn{4}{c|}{\textbf{Image Quality}} & \multicolumn{4}{c}{\textbf{Video Quality}} \\
\rowcolor{tablehead}
 & & & FID$\downarrow$ & PSNR$\uparrow$ & IQ$\uparrow$ & SSIM$\uparrow$ & FVD$\downarrow$ & TF$\uparrow$ & AQ$\uparrow$ & Div.$\uparrow$ \\
\midrule
MagicDrive-V2~\citep{magicdrive2}                & 33 & 848$\times$1600 & 10.89 & 30.99 & 51.7\% & 0.83 & 64.81 & 92.1\% & 50.6\% & 29.5\% \\
DriveDreamer-2~\citep{DD2}                       & 16 & 384$\times$640  & 14.32 & 29.89 & 50.6\% & 0.79 & 55.70 & \secondbest{95.2\%} & 51.4\% & 33.1\% \\
UniMLVG~\citep{chen2024unimlvg}                  & 30 & 384$\times$704  & \best{5.80} & \secondbest{31.04} & \secondbest{57.7\%} & \secondbest{0.85} & \best{36.10} & 95.0\% & \best{55.6\%} & 27.4\% \\
DriveScape$^\ast$~\citep{wu2024drivescape}       & -- & 1024$\times$576 &  8.34 & --    & --     & --   & 76.39 & --     & --     & -- \\
Drive-WM~\citep{wang2024driving}                 & 12 & 192$\times$384  & 25.88 & 26.91 & 49.2\% & 0.71 & 122.70 & 86.3\% & 44.1\% & \secondbest{37.9\%} \\
Panacea~\citep{wen2024panacea}                   & 16 & 256$\times$512  & 14.91 & 30.01 & 50.8\% & 0.80 & 244.00 & 93.2\% & 41.5\% & 34.1\% \\
GAIA-2$^\ast$~\citep{gaia2}                      & 16 & 448$\times$960  &  9.46 & --    & --     & --   & 52.30  & --     & --     & -- \\
Vista~\citep{gao2024vista}                       & 25 & 576$\times$1024 &  8.82 & 29.19 & 49.1\% & 0.77 & 92.32  & 90.5\% & 52.1\% & 34.5\% \\
DiVE$^\ast$~\citep{jiang2024dive}                & -- & --              & --    & --    & 51.8\% & --   & 94.60  & --     & --     & -- \\
Delphi$^\ast$~\citep{ma2024unleashing}           & 40 & 512$\times$1024 & 15.08 & --    & --     & --   & 113.50 & --     & --     & -- \\
DrivingDiffusion$^\ast$~\citep{li2024drivingdiffusion} & 12 & 256$\times$512 & 15.83 & 27.42 & --   & -- & 119.42 & --     & --     & -- \\
DriveGAN~\citep{kim2021drivegan}                 & 16 & 256$\times$256  & 31.79 & 24.32 & 37.1\% & 0.60 & 502.30 & 94.4\% & 43.2\% & \best{38.8\%} \\
\midrule
\rowcolor{oursrow}
$\bigstar$\ \textbf{\method{} (ours)}            & 32 & 880$\times$1280 & \secondbest{8.01} & \best{31.15} & \best{59.5\%} & \best{0.87} & \secondbest{45.75} & \best{97.0\%} & \secondbest{53.4\%} & 33.7\% \\
\bottomrule
\end{tabular}}
\vspace{-0.45cm}
\end{table*}

\paragraph{Agent~B --- \cartographer.}
The \cartographer{} grounds \worldscript{} into geometry.  For each $(n,t)$ pair it programmatically renders a sparse layout image $\mathbf I_{n,t}^{\text{geo}}\!\in\!\R^{H\times W\times 3}$ in which (a) the HD-map is rasterised under camera $n$'s extrinsics using a fixed per-class colour palette (lanes \textcolor{tridentblue}{$\blacksquare$}, drivable \textcolor{tridentgreen}{$\blacksquare$}, crosswalk \textcolor{tridentred}{$\blacksquare$}); (b) every 3-D box $b_k$ is projected with $\pi(K_n,R_n,t_n,b_k)$ and stroked with a class-coded outline; (c) the ego ribbon traces $\mathcal E_{\text{ego}}$.  The full set $\{\mathbf I_{n,t}^{\text{geo}}\}$ undergoes the \emph{same} view-time permutation as the RGB stream and is passed through the same VAE encoder $E_{\bm\phi}$ to produce geometric tokens $\mathbf C^{\text{geo}}\in\R^{L\times d}$, indexed identically to the visual tokens $\mathbf X$.  Camera extrinsics $(R_n,t_n)$ are summarised as a $6$-D Plücker ray, embedded by a two-layer MLP into a single token $c^{\text{cam}}_n\!\in\!\R^{d}$ per camera and \emph{concatenated into} $\mathbf C^{\text{sem}}$.  Algorithmic details are in Appendix~\ref{app:cartographer}.

\begin{table}[t]
\centering
\caption{\textbf{Pose-aware multi-view consistency} on overlapping FOVs (front/front-left and front/front-right). EPC: epipolar photometric MAE ($\downarrow$); OFC: DINOv2 cosine on epipolar correspondences ($\uparrow$).}
\vspace{-0.25cm}
\label{tab:epipolar}
\small
\renewcommand{\arraystretch}{1.25}
\begin{tabularx}{\columnwidth}{l X X}
\toprule
\rowcolor{tablehead}
\textbf{Model} & EPC$\downarrow$ & OFC$\uparrow$ \\
\midrule
MagicDrive-V2~\citep{magicdrive2}  & 0.184 & 0.612 \\
UniMLVG~\citep{chen2024unimlvg}    & 0.176 & 0.628 \\
GAIA-2~\citep{gaia2}               & 0.169 & 0.641 \\
\rowcolor{oursrow}
\textbf{\method{} (ours)}          & \best{0.132} & \best{0.703} \\
\midrule
\textit{Real nuScenes (ceiling)}   & \textit{0.093} & \textit{0.789} \\
\bottomrule
\end{tabularx}
\vspace{-0.45cm}
\end{table}

\paragraph{Agent~C --- \auditor.}
During training, after each diffusion sample, the \auditor{} consumes the decoded multi-view crops together with the \worldscript{} and produces a structured critique
$\mathcal R = \{(r_{ij},a_{ij})\}_{i<j}$
where $r_{ij}\!\in\![0,1]$ is the perceived consistency score between cameras $i,j$ at the same physical instant and $a_{ij}$ tags the dominant failure mode (e.g.\ \texttt{color\_drift}, \texttt{ghost}, \texttt{topology\_misalign}).  We embed $\mathcal R$ into review tokens $\mathbf C^{\text{rev}}\!\in\!\R^{N(N{-}1)/2\times d}$ (Appendix~C) and add an auxiliary objective (\S\ref{sec:training}) that pulls $v_{\bm\theta}$ toward the \auditor{}'s recommendations.  At inference, the \auditor{} is invoked optionally for \emph{test-time correction}: it scores intermediate samples and triggers a single refinement step if $\bar r_{ij}\!<\!\tau$.

\paragraph{}
\noindent The three agents are bound to the diffuser at the token-coordinate level: $\mathbf C^{\text{geo}}$ shares positional indices with $\mathbf X$, $\mathbf C^{\text{rev}}$ provides a differentiable supervisory signal that flows back into $v_{\bm\theta}$, and $\mathbf C^{\text{sem}}$ inhabits an offset positional zone in the same coordinate frame.  Removing any agent collapses controllability or cross-view consistency by margins documented in Table~\ref{tab:agent_ablation}.

\begin{figure*}[t]
    \centering
    \includegraphics[width=\textwidth]{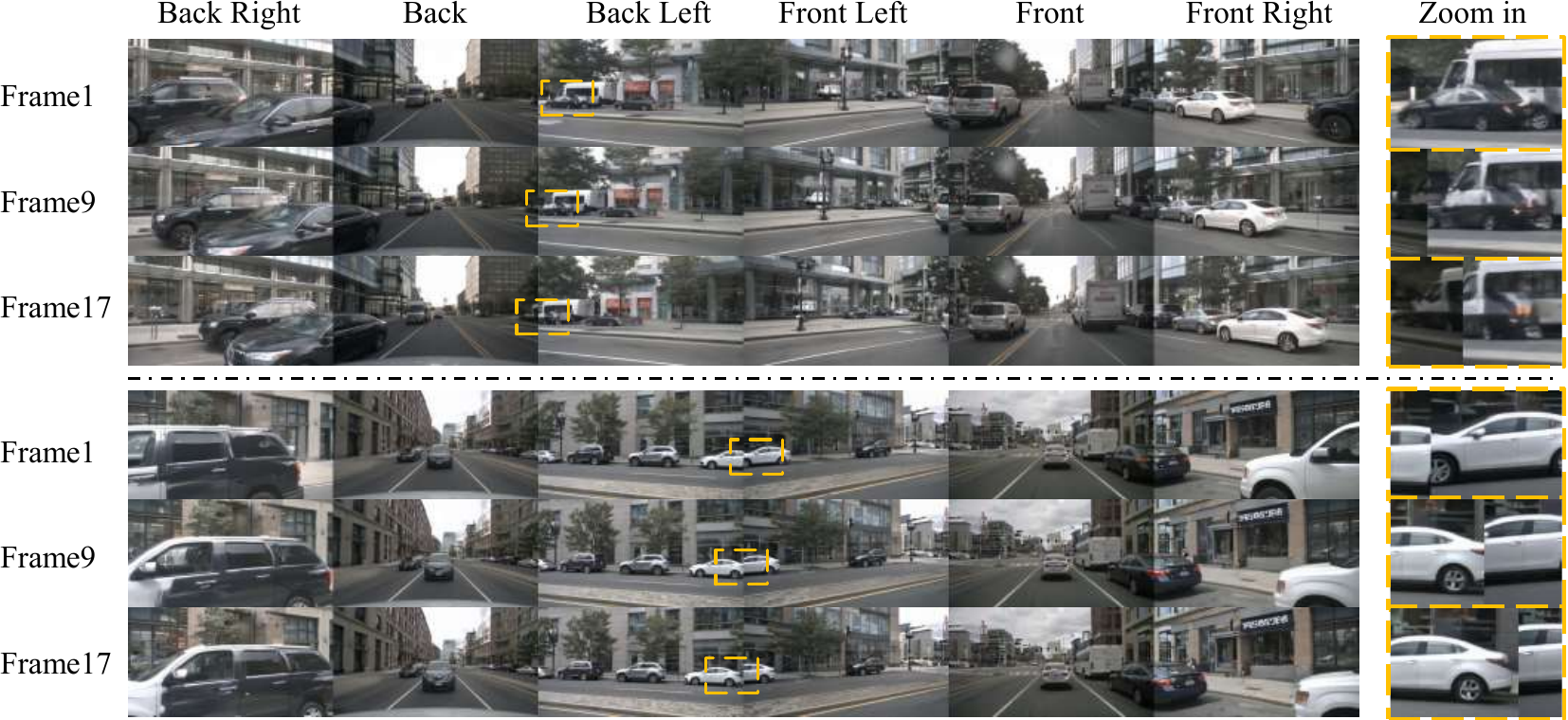}\vspace{-0.25cm}
    \caption{\textbf{Multi-view consistency comparison of MagicDrive-V2~\citep{magicdrive2} (top) and \method{} (bottom).} The co-compressed latent in \method{} maintains synchronous illumination and structural alignment across all six cameras, whereas the cross-attention baseline exhibits noticeable photometric drift and ghosting between adjacent views.}
    \label{fig:multi}
\end{figure*}

\begin{figure*}[t]
\centering
\includegraphics[width=\textwidth]{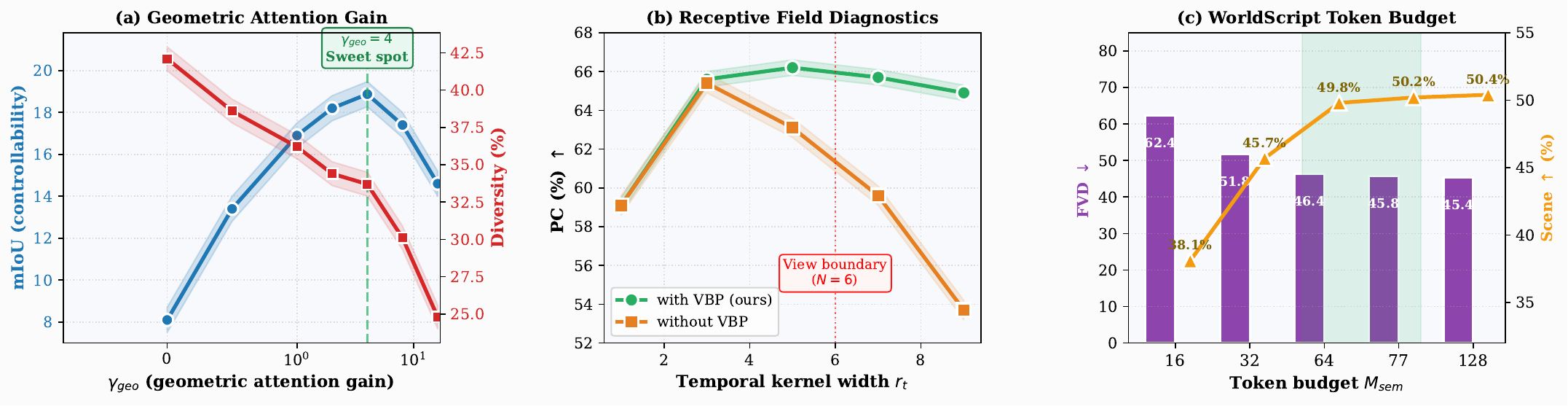}\vspace{-0.25cm}
\caption{\textbf{Three controlled diagnostic sweeps.} (a) The geometric attention gain $\gamma_{\mathrm{geo}}$ shows a clean sweet spot at $\gamma_{\mathrm{geo}}{=}4$ where mIoU peaks without collapsing Diversity. (b) When the temporal kernel width $r_t$ exceeds the number of cameras $N{=}6$, PC \emph{collapses by ${\sim}6$\,pt} unless our view-block-aware padding (VBP) is applied---directly addressing cross-instant leakage concerns. (c) \worldscript{} token budget saturates at $M_{\text{sem}}{=}64$.}
\label{fig:diagnostics}
\vspace{-0.45cm}
\end{figure*}

\subsection{Latent Co-Compression}
\label{sec:cocomp}

At the heart of our representational substrate is a view-time permutation
\begin{equation}
\Pi:(n,t)\mapsto\tilde t=(n{-}1)T+t,\;\; \tilde T = NT,
\label{eq:perm}
\end{equation}
applied jointly to the RGB cube and the \cartographer{}'s geometric cube. The permuted tensor $\tilde{\mathbf x}{=}\Pi(\mathbf x)$ is fed to a \emph{single} 3-D VAE $E_{\bm\phi}$ pretrained on generic videos~\citep{kong2024hunyuanvideo}, with no architectural change; the last residual blocks are fine-tuned, while all other weights transfer verbatim since $\Pi$ only re-orders indices. To exploit physical synchrony, we additionally share the noise endpoint across views captured at the same instant, $\mathbf z_1^{(n,t)}{=}\mathbf z_1^{(n',t)}$, which halves cross-view photometric variance at $s{\approx}1$ without restricting the asymptotic distribution (Appendix~D).

We address three theoretical concerns commonly raised against pseudo-temporal stacking in Appendix~\ref{app:cocomp} (with formal proofs in Appendix~\ref{app:proofs}) and summarise the conclusions here. \emph{(i) Receptive-field correctness}: temporal kernels of width $r_t{=}3$ at the highest resolution stay below $N{=}6$ cameras, and we mask weights that would straddle a view boundary via a learnable view-block-aware padding operator $\mathrm{VBP}(\cdot)$. \emph{(ii) Variance reduction}: when the kernel touches $k{\leq}r_t$ same-instant views, mean pooling along the pseudo-time axis yields
\begin{equation}
\mathrm{Var}\bigl[\mathbf z_{\tilde t}\bigr]_{\mathrm{co}} \geq \tfrac{1}{k}\,\sigma_{\mathrm{inter}}^2,
\label{eq:var}
\end{equation}
with empirical $k{\approx}2.6$ at layer~1 predicting a ${\sim}60\%$ reduction, confirmed by Table~\ref{tab:consistency_control}. \emph{(iii) Lipschitz invariance}: $\Pi$ is an orthogonal permutation, so $\mathrm{Lip}(E_{\bm\phi})$ and the rectified-flow ODE on $\calM_{\mathbf z}$ are preserved. Together these properties make $\Pi$ a no-op in the worst case (uncorrelated views) and a regulariser in the typical case where same-instant views share global semantics---precisely the behaviour we want.

\begin{figure}[t]
\centering
\includegraphics[width=\columnwidth]{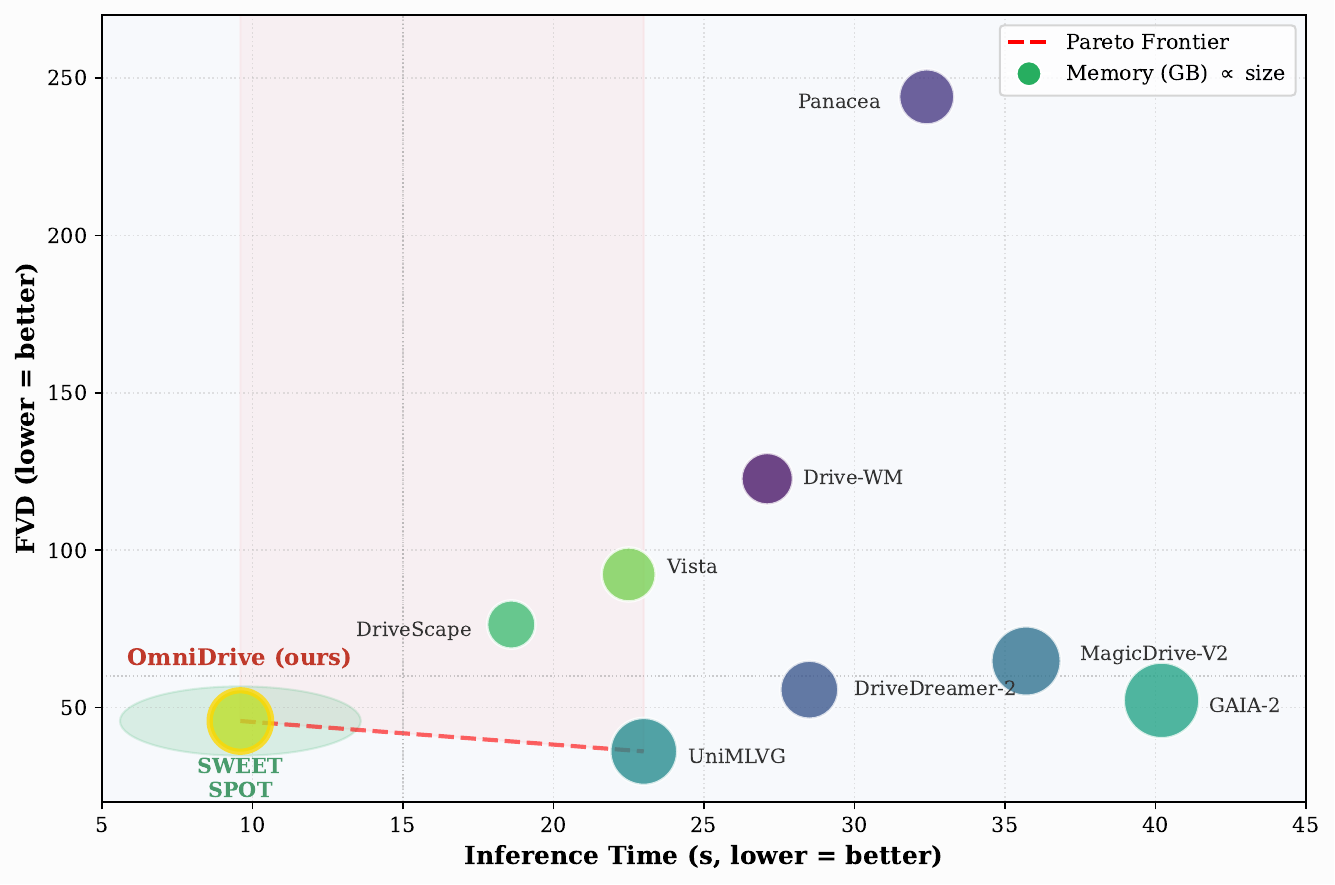}\vspace{-0.25cm}
\caption{\textbf{Quality--efficiency Pareto frontier} on six-camera nuScenes generation. \method{} pushes the frontier into a previously empty region, achieving the lowest FVD ($45.7$) at the lowest inference latency ($9.6$\,s/clip). Bubble area encodes peak GPU memory; the dashed curve traces the empirical Pareto frontier of the baselines.}
\label{fig:pareto}
\vspace{-0.65cm}
\end{figure}

\subsection{Choreographed Generation in MM-DiT}
\label{sec:gen}

The co-compressed latent $\mathbf z_0$ is patchified by a $k_t{\times}k_h{\times}k_w$ 3-D convolution into tokens $\mathbf X{=}\{x_\ell\}_{\ell=1}^{L}$, each carrying a 3-D RoPE index $\bm\pi(\tilde t,i,j)$. All agent-authored conditions are then concatenated into a single deterministic sequence
\begin{equation}
\mathbf S = [\mathbf X;\mathbf C^{\text{sem}};\mathbf C^{\text{geo}};\mathbf C^{\text{tmp}};\mathbf C^{\text{rev}}],
\label{eq:seq}
\end{equation}
under a positional schema in which $\mathbf X$ and $\mathbf C^{\text{geo}}$ \emph{share the same triplet} $(\tilde t,i,j)$ to force pixel-level geometric grounding, $\mathbf C^{\text{sem}}$ is offset by $(\Delta_i,0)$ with $\Delta_i{=}32$, $\mathbf C^{\text{tmp}}$ supplies $\tilde T'$ sinusoidal tokens $\gamma_{\mathrm{time}}(\tau_{\tilde t})$ with $\tau_{\tilde t}{=}\tilde t/\tilde T'$, and $\mathbf C^{\text{rev}}$ occupies a separate channel masked at inference. Within MM-DiT, multi-modal attention is augmented with a controllable bias applied \emph{only} to the cross-block between $\mathbf X$ and $\mathbf C^{\text{geo}}$:
\begin{equation}
\mathrm{MMA} = \softmax\!\Bigl(\tfrac{\mathbf Q\mathbf K^{\!\top}}{\sqrt d} + \log\gamma_{\mathrm{geo}}\,M_{\mathrm{geo}}\Bigr)\mathbf V,
\label{eq:gamma}
\end{equation}
where $M_{\mathrm{geo}}$ is a binary mask isolating the $(\mathbf X,\mathbf C^{\text{geo}})$ block; $\gamma_{\mathrm{geo}}{>}1$ tightens geometric adherence while $\gamma_{\mathrm{geo}}{=}1$ recovers vanilla MM-DiT (note that $\gamma_{\mathrm{geo}}$ is a scalar attention gain, distinct from the sinusoidal embedding $\gamma_{\mathrm{time}}(\cdot)$). By projecting every modality into one token grid and binding geometric tokens to pixel coordinates, we collapse $K$-branch cross-attention into a single self-attention with explicit positional alignment, which yields $+5.6$ mIoU over a ControlNet variant and $+3.1$ over additive cross-attention (Table~\ref{tab:control_design}).

\begin{table*}[t]
\centering
\caption{\textbf{Multi-view consistency and fine-grained controllability on nuScenes.} Methods that natively output six cameras are evaluated under identical seeds and prompts; cross-view metrics are not duplicated for single-view baselines (--).}
\vspace{-0.25cm}
\label{tab:consistency_control}
\small
\renewcommand{\arraystretch}{1.30}
\resizebox{\textwidth}{!}{%
\begin{tabular}{l|ccccc|cc|cc}
\toprule
\rowcolor{tablehead}
\textbf{Model} & \multicolumn{5}{c|}{\textbf{Multi-view Consistency}} & \multicolumn{2}{c|}{\textbf{Geom.\ Control}} & \multicolumn{2}{c}{\textbf{Sem.\ Control}} \\
\rowcolor{tablehead}
 & SC$\uparrow$ & MS$\uparrow$ & PC$\uparrow$ & BC$\uparrow$ & OC$\uparrow$ & mAP$\uparrow$ & mIoU$\uparrow$ & Scene$\uparrow$ & AS$\uparrow$ \\
\midrule
MagicDrive-V2~\citep{magicdrive2}      & \secondbest{91.3\%} & \secondbest{82.9\%} & \secondbest{62.8\%} & \secondbest{92.6\%} & 18.5\% & 18.17 & \best{20.40} & 49.1\% & 8.6\% \\
DriveDreamer-2~\citep{DD2}             & 89.1\% & 70.5\% & 61.6\% & 90.9\% & 13.1\% & \secondbest{21.39} & 17.57 & 45.4\% & 16.7\% \\
Drive-WM~\citep{wang2024driving}       & 82.5\% & 69.8\% & 61.1\% & 86.4\% &  9.1\% & --     & --     & 29.1\% & 6.5\% \\
UniMLVG~\citep{chen2024unimlvg}        & 90.7\% & 81.4\% & 62.6\% & 91.3\% & \best{19.1\%} & 19.70 & \secondbest{19.14} & \best{50.9\%} & \secondbest{17.6\%} \\
GAIA-2~\citep{gaia2}                   & 90.2\% & 80.6\% & 61.0\% & 91.0\% & 18.4\% & --     & --     & 48.3\% & 15.1\% \\
Panacea~\citep{wen2024panacea}         & 85.8\% & 70.6\% & 57.5\% & 82.1\% & 14.9\% & --     &  8.65  & 33.0\% &  7.4\% \\
\midrule
\rowcolor{oursrow}
$\bigstar$\ \textbf{\method{} (ours)}  & \best{93.1\%} & \best{86.8\%} & \best{65.6\%} & \best{95.5\%} & \secondbest{18.7\%} & \best{21.55} & 18.87 & \secondbest{50.2\%} & \best{19.9\%} \\
\bottomrule
\end{tabular}}
\vspace{-0.45cm}
\end{table*}

\subsection{Training and Inference}
\label{sec:training}

Training combines the conditional flow-matching loss with two regularisers,
\begin{equation}
\calL = \calL_{\mathrm{CFM}} + \lambda_{\mathrm{sm}}\calL_{\mathrm{sm}} + \lambda_{\mathrm{rev}}\calL_{\mathrm{rev}},
\label{eq:loss}
\end{equation}
where $\calL_{\mathrm{sm}}{=}\E_{\tilde t}\|\nabla_{\tilde t}^{(2)}\mathbf z_0^{(\tilde t)}\|_2^2$ is an \emph{equivariance-aware} smoothness term: the discrete second-order temporal difference vanishes for affine motion and therefore penalises only non-physical latent jumps, replacing the motion-suppressing time-permutation regulariser used in prior work. The \auditor{} loss $\calL_{\mathrm{rev}}{=}\sum_{i<j}(1{-}r_{ij})\|\Delta v_{ij}\|_2^2$ aligns the velocity disagreement $\Delta v_{ij}$ between views $i,j$ at the same instant with the critic score $r_{ij}$, closing the agentic loop.

\begin{table}[t]
\centering
\caption{\textbf{Downstream BEVFormer-S on real nuScenes val.} Training set is real (700 train scenes) or synthetic (20k of our generated clips conditioned on train-split metadata).}
\vspace{-0.25cm}
\label{tab:downstream}
\small
\renewcommand{\arraystretch}{1.25}
\begin{tabularx}{\columnwidth}{l X X}
\toprule
\rowcolor{tablehead}
\textbf{Training data} & mAP$\uparrow$ & NDS$\uparrow$ \\
\midrule
Real (nuScenes train)                            & 34.7 & 41.6 \\
Synthetic from MagicDrive-V2                     & 31.9 & 39.1 \\
Synthetic from UniMLVG                           & 33.1 & 40.2 \\
\rowcolor{oursrow}
Synthetic from \textbf{\method{}}                & \best{36.8} & \best{44.0} \\
\midrule
\rowcolor{oursrow}
Real $+$ Synthetic from \method{}                & \best{38.4} & \best{45.7} \\
\bottomrule
\end{tabularx}
\vspace{-0.15cm}
\end{table}

\begin{table}[t]
\centering
\caption{\textbf{Multi-agent decomposition.} Each agent is independently disabled.}
\vspace{-0.25cm}
\label{tab:agent_ablation}
\small
\renewcommand{\arraystretch}{1.25}
\begin{tabularx}{\columnwidth}{l X X X X}
\toprule
\rowcolor{tablehead}
\textbf{Variant} & FVD$\downarrow$ & PC$\uparrow$ & mAP$\uparrow$ & Scene$\uparrow$ \\
\midrule
w/o \director{}              & 61.4 & 64.9\% & 15.83 & 45.4\% \\
w/o \cartographer{}          & 58.9 & 63.7\% & 17.21 & 49.6\% \\
w/o \auditor{}               & 49.3 & 63.0\% & 21.10 & 50.0\% \\
w/o all three                & 67.8 & 59.2\% & 14.51 & 44.2\% \\
\rowcolor{oursrow}
\textbf{\method{} (full)}    & \best{45.75} & \best{65.6\%} & \best{21.55} & \best{50.2\%} \\
\bottomrule
\end{tabularx}
\vspace{-0.55cm}
\end{table}

We initialise from SD3~\citep{esser2024scaling} and train on nuScenes~\citep{nuscenes2020} at $1280{\times}880$ with $32$ frames on $32$ H200 GPUs, using a three-stage curriculum (semantic-only $\to$ ${+}$geometry $\to$ ${+}$auditor, 800k iterations in total) that progressively activates the conditioning streams; full stage-wise hyperparameters are deferred to Appendix~\ref{app:training}. At inference we integrate the rectified-flow ODE from $s{=}1$ to $0$ in $K{=}2$ Heun (predictor--corrector) steps. Image-to-video conditioning is treated as a \emph{prediction} of subsequent frames given the first multi-view frame---the model is \textbf{not} autoregressive and does not stream tokens. A one-shot photometric matching pass~\citep{reinhard2001color} aligns overlapping fields-of-view to the calibrated sensor offsets of nuScenes.

\begin{figure}[t]
\centering
\includegraphics[width=\columnwidth]{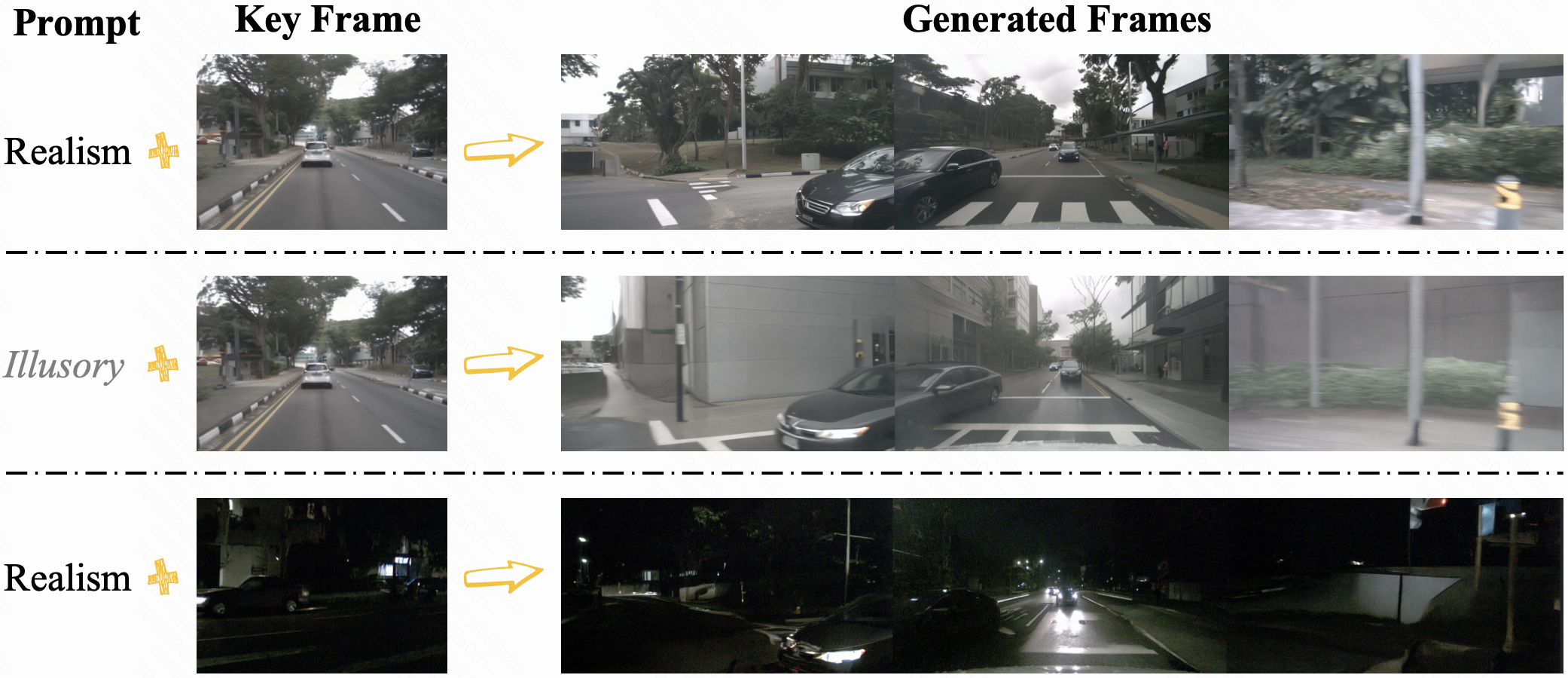}\vspace{-0.25cm}
\caption{\textbf{\method{} responds faithfully to diverse control conditions.} Swapping a single \worldscript{} field (HD-map, trajectory, weather, style) alters \emph{only} the targeted attribute, leaving the others intact---evidence of fine-grained, disentangled controllability.}
\label{fig:condition}
\vspace{-0.3cm}
\end{figure}

\section{Experiments}
\label{sec:exp}

\subsection{Setup}
\label{sec:setup}

We train \emph{exclusively} on the nuScenes~\citep{nuscenes2020} 700-scene training split (6 cameras, 12\,Hz interpolated, $\sim$30k clips after gap-filling) and evaluate on the 150-scene validation split. Image- and video-level fidelity follow the VBench/VBench++/VBench-2.0 protocols~\citep{huang2024vbench,huang2024vbench++,zheng2025vbench2}; multi-view consistency is scored per view and averaged at matching timestamps. We further introduce two pose-aware metrics, \textbf{EPC} (epipolar photometric error from ground-truth extrinsics) and \textbf{OFC} (DINOv2 cosine on epipolar correspondences), and assess controllability via BEVFormer~\citep{li2203bevformer} mAP/mIoU and VBench-2.0 Scene/AS~\citep{zheng2025vbench2}. All three agents share one Qwen2.5-VL-7B-Instruct~\citep{bai2025qwen25vl} server at amortised cost $0.07$\,s/sample; prompts and token budgets appear in Appendices~\ref{app:director}--\ref{app:auditor}, and full metric definitions in Appendix~\ref{app:metrics}. We compare against twelve published baselines~\citep{magicdrive2,chen2024unimlvg,wu2024drivescape,DD2,wang2024driving,wen2024panacea,gao2024vista,gaia2,jiang2024dive,ma2024unleashing,li2024drivingdiffusion,kim2021drivegan}, reporting only six-camera evaluations under each baseline's documented resolution and frame count.

\subsection{Main Results}

\begin{figure}[t]
\centering
\includegraphics[width=\linewidth]{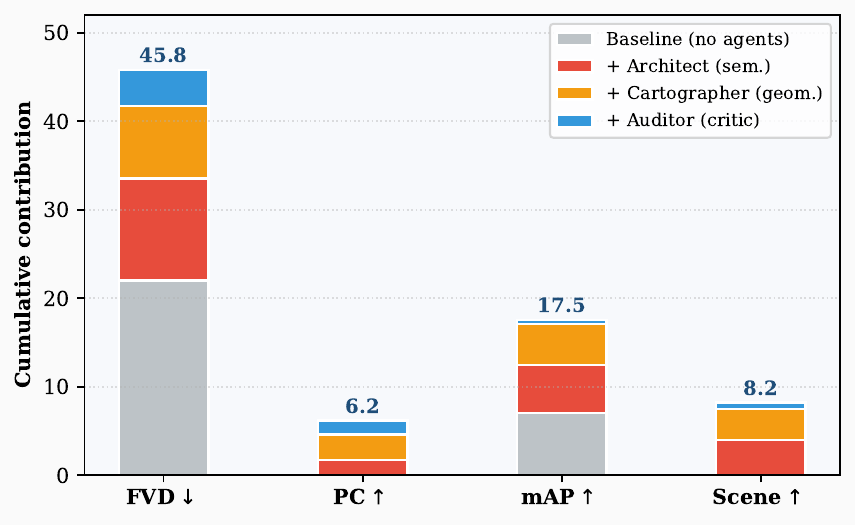}\vspace{-0.25cm}
\caption{\textbf{Each agent earns its place.} Stacked decomposition of each agent's contribution to four metrics: the \director{} dominates semantic alignment, the \cartographer{} drives geometric consistency, and the \auditor{} provides cross-view polish.}
\label{fig:agent_decomp}
\vspace{-0.55cm}
\end{figure}

Tables~\ref{tab:nuscenes_comparison}--\ref{tab:consistency_control} together with Fig.~\ref{fig:radar} establish that \method{} sets the broadest envelope on nuScenes, supporting the core claim of \S\ref{sec:intro} that agent-based decoupling does not trade off fidelity. On image/video quality we obtain FID $8.01$ / FVD $45.7$ with diversity preserved at $33.7\%$, beating MagicDrive-V2 by $-2.9$ FID and $-19.1$ FVD and remaining a close second only to UniMLVG on these two distributional metrics while leading every other quality dimension (PSNR/IQ/SSIM/TF). The advantage is sharpest on cross-view coherence---SC $93.1\%$, PC $65.6\%$ (a $+3.0$\,pt jump over the runner-up), and a ${\sim}28\%$ EPC reduction versus MagicDrive-V2 on the pose-aware overlap test (Table~\ref{tab:epipolar})---pushing the geometry-grounded metrics closer to the real-data ceiling than any prior method and providing direct numerical support for the co-compression bound in Eq.~\ref{eq:var}; Fig.~\ref{fig:multi} is the qualitative counterpart. Controllability follows the same pattern: \method{} records the new BEV mAP top score at $21.55$, stays within $0.3$\,pt of UniMLVG on mIoU, leads on AS by $+2.3$\,pt and trails UniMLVG on Scene by only $0.7$\,pt, all \emph{without} any ControlNet branch---and Fig.~\ref{fig:condition} confirms that toggling a single \worldscript{} field rewrites only the targeted attribute, the disentangled behaviour predicted by \S\ref{sec:gen}.

\begin{figure}[t]
\centering
\includegraphics[width=0.9\columnwidth]{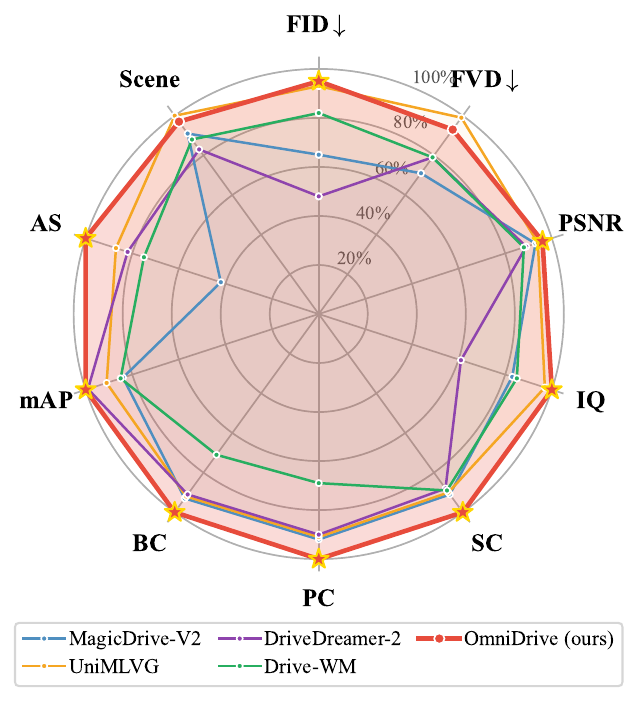}\vspace{-0.35cm}
\caption{\textbf{Holistic comparison across ten metrics on nuScenes.} Values are normalised so that higher is better (FID/FVD inverted); \method{} (red, starred) achieves the broadest envelope.}
\label{fig:radar}
\vspace{-0.25cm}
\end{figure}

\begin{table}[t]
\centering
\caption{\textbf{Choreographed sequence vs.\ classical control injection.} Same conditioning content is re-routed through three alternative pathways.}
\vspace{-0.25cm}
\label{tab:control_design}
\small
\renewcommand{\arraystretch}{1.25}
\begin{tabularx}{\columnwidth}{l X X X}
\toprule
\rowcolor{tablehead}
\textbf{Pathway} & FVD$\downarrow$ & mIoU$\uparrow$ & Scene$\uparrow$ \\
\midrule
ControlNet branches                 & 65.6 & 13.3 & 41.8\% \\
Additive cross-attn.\ adapters      & 56.3 & 15.8 & 45.6\% \\
\rowcolor{oursrow}
\textbf{Choreographed seq.\ (ours)} & \best{45.75} & \best{18.87} & \best{50.2\%} \\
\bottomrule
\end{tabularx}
\vspace{-0.45cm}
\end{table}

\begin{table}[t]
\centering
\caption{\textbf{VAE choice and fine-tuning.} HunyuanVAE fine-tuned on view-time-permuted multi-view footage substantially improves video metrics.}
\vspace{-0.20cm}
\label{tab:vae_quality}
\small
\renewcommand{\arraystretch}{1.2}
\begin{tabularx}{\columnwidth}{l|X X|X X}
\toprule
\rowcolor{tablehead}
\textbf{3D VAE} & \multicolumn{2}{c|}{Image Quality} & \multicolumn{2}{c}{Video Quality} \\
\rowcolor{tablehead}
 & FID$\downarrow$ & PSNR$\uparrow$ & FVD$\downarrow$ & TF$\uparrow$ \\
\midrule
CogVAE w/o FT             & 18.75 & 30.75 & 268.27 & 96.5\% \\
HunyuanVAE w/o FT         & 17.97 & 31.44 & 237.42 & 96.8\% \\
\rowcolor{oursrow}
\textbf{HunyuanVAE w/ FT} & \best{15.71} & \best{32.65} & \best{89.31} & \best{99.0\%} \\
\bottomrule
\end{tabularx}
\vspace{-0.25cm}
\end{table}

\begin{table}[t]
\centering
\caption{\textbf{Targeted perturbations to the choreographed sequence.} Each row toggles a single sub-stream or positional choice; the sequence degrades \emph{gracefully}, isolating failure modes to the removed component.}
\label{tab:tokenperturb}
\small
\renewcommand{\arraystretch}{1.25}
\begin{tabularx}{\columnwidth}{l X X X}
\toprule
\rowcolor{tablehead}
\textbf{Perturbation} & FVD$\downarrow$ & AS$\uparrow$ & mAP$\uparrow$ \\
\midrule
remove style token                    & 50.2 & 14.5\% & 21.40 \\
remove camera-pose token              & 52.7 & 18.6\% & 19.91 \\
shift sem.\ offset $\Delta_i{\to}0$   & 58.4 & 17.4\% & 17.82 \\
shuffle agent order in $\mathbf S$    & 51.9 & 18.3\% & 20.62 \\
\rowcolor{oursrow}
\textbf{full sequence (ours)}         & \best{45.75} & \best{19.9\%} & \best{21.55} \\
\bottomrule
\end{tabularx}
\end{table}

\begin{table}[!t]
\centering
\caption{\textbf{Compression strategy.} PV: per-view encoding; CC: co-compression (ours).}
\label{tab:unified_ablation}
\small
\renewcommand{\arraystretch}{1.2}
\begin{tabularx}{\columnwidth}{c|X X X|X X}
\toprule
\rowcolor{tablehead}
\textbf{Strategy} & \multicolumn{3}{c|}{Consistency} & \multicolumn{2}{c}{Controllability} \\
\rowcolor{tablehead}
 & SC$\uparrow$ & PC$\uparrow$ & BC$\uparrow$ & mIoU$\uparrow$ & Scene$\uparrow$ \\
\midrule
PV (baseline)      & 92.5\% & 30.7\% & 91.8\% & 14.41 &  9.8\% \\
\rowcolor{oursrow}
\textbf{CC (ours)} & \best{93.1\%} & \best{65.6\%} & \best{95.5\%} & \best{18.87} & \best{19.9\%} \\
\bottomrule
\end{tabularx}
\end{table}

The unified-sequence design also delivers the efficiency promised in \S\ref{sec:gen}: under matched hardware Fig.~\ref{fig:pareto} pushes \method{} into a previously empty corner of the FVD--latency--memory plane with only $K{=}2$ Heun steps, $3.7{\times}$ faster than MagicDrive-V2 and $2.4{\times}$ faster than UniMLVG. The resulting corpus is also useful downstream (Table~\ref{tab:downstream}): a BEVFormer-S detector trained \emph{purely} on $20$k of our synthetic clips reaches $36.8$ mAP / $44.0$ NDS (${+}2.1/{+}2.4$ over a real-data-only baseline), and mixing real with synthetic data lifts mAP further to $38.4$, validating that the generated corpus carries genuine downstream value.

\subsection{Ablation Studies}
\label{sec:ablation}

Removing each agent in turn (Table~\ref{tab:agent_ablation}) confirms the complementary specialisations visualised in Fig.~\ref{fig:agent_decomp}: ablating the \director{} costs $-5.7$ mAP / $-4.8$\,pt Scene, the \cartographer{} costs $-4.3$ mIoU, and the \auditor{} costs $-2.6$\,pt PC, while dropping all three regresses the backbone to near-MagicDrive-V2 levels. Re-routing the same conditions through ControlNet~\citep{zhang2023controlnet} or additive cross-attention adapters~\citep{mou2024t2iadapter} costs $+19.9$ FVD and $-5.6$ mIoU (Table~\ref{tab:control_design}), justifying the choreographed sequence over conventional injection. At the representational layer, replacing co-compression with per-view encoding collapses PC from $65.6\%$ to $30.7\%$ and drops mIoU by $4.5$\,pt (Table~\ref{tab:unified_ablation}); fine-tuning Hunyuan-3D VAE on view-time-permuted footage cuts FVD from $237.4$ to $89.3$ (Table~\ref{tab:vae_quality}), together isolating the value of adapting the encoder to the permutation $\Pi$. Fig.~\ref{fig:diagnostics} validates three controlled design choices: $\gamma_{\mathrm{geo}}{=}4$ is the empirical sweet spot, VBP fully repairs the $-6$\,pt PC drop that otherwise occurs at $r_t{\geq}7$ (supporting the masking argument behind Eq.~\ref{eq:var}), and the \worldscript{} budget saturates at $M_{\text{sem}}{=}64$. Finally, targeted perturbations to the sequence (Table~\ref{tab:tokenperturb}) each induce a distinct, predictable failure mode, confirming that the choreographed token sequence is composable rather than monolithic.

\section{Conclusion}
\label{sec:conclusion}

We presented \method{}, the first multi-view driving world model that binds language understanding, geometric layout, and pixel generation into a single agentic framework. By resolving the long-standing challenges of heterogeneous control injection and cross-view consistency, \method{} achieves significant improvements in both multi-view generation quality and downstream autonomous driving perception.

\clearpage
\section*{Limitations}
\label{sec:limitations}
\method{} relies on six-camera nuScenes calibration; transferring to vehicles with arbitrary camera arrays requires re-rendering the \cartographer{}'s sparse layouts under new extrinsics.  The \auditor{}'s critiques are bounded by Qwen2.5-VL's vision capabilities and may miss high-frequency artefacts.  We currently support clips up to $32$ frames at $1280{\times}880$; extending to minute-long videos requires either chunked sampling or a hierarchical \director{}, which we leave to future work.  Our co-compression assumes spatially overlapping multi-cameras; truly disjoint sensor configurations would require a learnable view-affinity matrix.  Finally, although \method{} runs at $9.6$\,s/clip on H200, real-time on-vehicle deployment (sub-100\,ms) is still out of reach without further distillation.

\section*{Ethical Considerations}
\label{sec:ethics}
Synthetic driving data accelerate AV development but risk leaking sensitive identities or replicating dataset biases.  We restrict generation to scenes paired with already-public nuScenes captures and inherit nuScenes' anonymisation pipeline.  We do not advocate using \method{} for safety-critical decision-making without human verification.  Downstream perception gains should not be interpreted as a substitute for real-world testing, and we recommend that any deployed system using \method{}-generated data undergo a dedicated coverage and bias audit.  All LLM agents are run locally with deterministic decoding, so no user prompts leave the training cluster.

\bibliographystyle{plainnat}
\bibliography{latex/custom}

@article{gaia2,
  title={Gaia-2: A controllable multi-view generative world model for autonomous driving},
  author={Russell, Lloyd and Hu, Anthony and Bertoni, Lorenzo and Fedoseev, George and Shotton, Jamie and Arani, Elahe and Corrado, Gianluca},
  journal={arXiv preprint arXiv:2503.20523},
  year={2025}
}

@inproceedings{DD1,
  title={Drivedreamer: Towards real-world-drive world models for autonomous driving},
  author={Wang, Xiaofeng and Zhu, Zheng and Huang, Guan and Chen, Xinze and Zhu, Jiagang and Lu, Jiwen},
  booktitle={European conference on computer vision},
  pages={55--72},
  year={2024},
  organization={Springer}
}

@inproceedings{DD2,
  title={Drivedreamer-2: Llm-enhanced world models for diverse driving video generation},
  author={Zhao, Guosheng and Wang, Xiaofeng and Zhu, Zheng and Chen, Xinze and Huang, Guan and Bao, Xiaoyi and Wang, Xingang},
  booktitle={Proceedings of the AAAI Conference on Artificial Intelligence},
  volume={39},
  number={10},
  pages={10412--10420},
  year={2025}
}

@article{magicdrive,
  title={Magicdrive: Street view generation with diverse 3d geometry control},
  author={Gao, Ruiyuan and Chen, Kai and Xie, Enze and Hong, Lanqing and Li, Zhenguo and Yeung, Dit-Yan and Xu, Qiang},
  journal={arXiv preprint arXiv:2310.02601},
  year={2023}
}

@article{magicdrive2,
  title={MagicDrive-V2: High-resolution long video generation for autonomous driving with adaptive control},
  author={Gao, Ruiyuan and Chen, Kai and Xiao, Bo and Hong, Lanqing and Li, Zhenguo and Xu, Qiang},
  journal={arXiv preprint arXiv:2411.13807},
  year={2024}
}

@article{yang2024cogvideox,
  title={Cogvideox: Text-to-video diffusion models with an expert transformer},
  author={Yang, Zhuoyi and Teng, Jiayan and Zheng, Wendi and Ding, Ming and Huang, Shiyu and Xu, Jiazheng and Yang, Yuanming and Hong, Wenyi and Zhang, Xiaohan and Feng, Guanyu and others},
  journal={arXiv preprint arXiv:2408.06072},
  year={2024}
}

@article{kong2024hunyuanvideo,
  title={Hunyuanvideo: A systematic framework for large video generative models},
  author={Kong, Weijie and Tian, Qi and Zhang, Zijian and Min, Rox and Dai, Zuozhuo and Zhou, Jin and Xiong, Jiangfeng and Li, Xin and Wu, Bo and Zhang, Jianwei and others},
  journal={arXiv preprint arXiv:2412.03603},
  year={2024}
}

@article{yao2024mygo,
  title={Mygo: Consistent and controllable multi-view driving video generation with camera control},
  author={Yao, Yining and Guo, Xi and Ding, Chenjing and Wu, Wei},
  journal={arXiv preprint arXiv:2409.06189},
  year={2024}
}

@article{chen2024unimlvg,
  title={Unimlvg: Unified framework for multi-view long video generation with comprehensive control capabilities for autonomous driving},
  author={Chen, Rui and Wu, Zehuan and Liu, Yichen and Guo, Yuxin and Ni, Jingcheng and Xia, Haifeng and Xia, Siyu},
  journal={arXiv preprint arXiv:2412.04842},
  year={2024}
}

@article{yang2024drivearena,
  title={Drivearena: A closed-loop generative simulation platform for autonomous driving},
  author={Yang, Xuemeng and Wen, Licheng and Ma, Yukai and Mei, Jianbiao and Li, Xin and Wei, Tiantian and Lei, Wenjie and Fu, Daocheng and Cai, Pinlong and Dou, Min and others},
  journal={arXiv preprint arXiv:2408.00415},
  year={2024}
}

@inproceedings{zhao2025drivedreamer4d,
  title={Drivedreamer4d: World models are effective data machines for 4d driving scene representation},
  author={Zhao, Guosheng and Ni, Chaojun and Wang, Xiaofeng and Zhu, Zheng and Zhang, Xueyang and Wang, Yida and Huang, Guan and Chen, Xinze and Wang, Boyuan and Zhang, Youyi and others},
  booktitle={Proceedings of the Computer Vision and Pattern Recognition Conference},
  pages={12015--12026},
  year={2025}
}

@article{mei2024dreamforge,
  title={Dreamforge: Motion-aware autoregressive video generation for multi-view driving scenes},
  author={Mei, Jianbiao and Hu, Tao and Yang, Xuemeng and Wen, Licheng and Yang, Yu and Wei, Tiantian and Ma, Yukai and Dou, Min and Shi, Botian and Liu, Yong},
  journal={arXiv preprint arXiv:2409.04003},
  year={2024}
}

@inproceedings{dit,
  title={Scaling rectified flow transformers for high-resolution image synthesis},
  author={Esser, Patrick and Kulal, Sumith and Blattmann, Andreas and Entezari, Rahim and M{\"u}ller, Jonas and Saini, Harry and Levi, Yam and Lorenz, Dominik and Sauer, Axel and Boesel, Frederic and others},
  booktitle={Forty-first international conference on machine learning},
  year={2024}
}

@article{lipman2022flowmatchin,
  title={Flow matching for generative modeling},
  author={Lipman, Yaron and Chen, Ricky TQ and Ben-Hamu, Heli and Nickel, Maximilian and Le, Matt},
  journal={arXiv preprint arXiv:2210.02747},
  year={2022}
}

@inproceedings{huang2024vbench,
  title={Vbench: Comprehensive benchmark suite for video generative models},
  author={Huang, Ziqi and He, Yinan and Yu, Jiashuo and Zhang, Fan and Si, Chenyang and Jiang, Yuming and Zhang, Yuanhan and Wu, Tianxing and Jin, Qingyang and Chanpaisit, Nattapol and others},
  booktitle={Proceedings of the IEEE/CVF Conference on Computer Vision and Pattern Recognition},
  pages={21807--21818},
  year={2024}
}

@article{zheng2025vbench2,
  title={Vbench-2.0: Advancing video generation benchmark suite for intrinsic faithfulness},
  author={Zheng, Dian and Huang, Ziqi and Liu, Hongbo and Zou, Kai and He, Yinan and Zhang, Fan and Zhang, Yuanhan and He, Jingwen and Zheng, Wei-Shi and Qiao, Yu and others},
  journal={arXiv preprint arXiv:2503.21755},
  year={2025}
}

@article{huang2024vbench++,
  title={Vbench++: Comprehensive and versatile benchmark suite for video generative models},
  author={Huang, Ziqi and Zhang, Fan and Xu, Xiaojie and He, Yinan and Yu, Jiashuo and Dong, Ziyue and Ma, Qianli and Chanpaisit, Nattapol and Si, Chenyang and Jiang, Yuming and others},
  journal={arXiv preprint arXiv:2411.13503},
  year={2024}
}

@article{jiang2024dive,
  title={Dive: Dit-based video generation with enhanced control},
  author={Jiang, Junpeng and Hong, Gangyi and Zhou, Lijun and Ma, Enhui and Hu, Hengtong and Zhou, Xia and Xiang, Jie and Liu, Fan and Yu, Kaicheng and Sun, Haiyang and others},
  journal={arXiv preprint arXiv:2409.01595},
  year={2024}
}

@inproceedings{kim2021drivegan,
  title={DriveGAN: Towards a Controllable High-Quality Neural Simulation},
  author={Kim, Seung Wook and Philion, Jonah and Torralba, Antonio and Fidler, Sanja},
  booktitle={Proceedings of the IEEE/CVF Conference on Computer Vision and Pattern Recognition},
  pages={5820--5829},
  year={2021}
}

@article{wu2024drivescape,
  title={Drivescape: Towards high-resolution controllable multi-view driving video generation},
  author={Wu, Wei and Guo, Xi and Tang, Weixuan and Huang, Tingxuan and Wang, Chiyu and Chen, Dongyue and Ding, Chenjing},
  journal={arXiv preprint arXiv:2409.05463},
  year={2024}
}

@inproceedings{umgen,
  title={Generating Multimodal Driving Scenes via Next-Scene Prediction},
  author={Wu, Yanhao and Zhang, Haoyang and Lin, Tianwei and Huang, Lichao and Luo, Shujie and Wu, Rui and Qiu, Congpei and Ke, Wei and Zhang, Tong},
  booktitle={Proceedings of the Computer Vision and Pattern Recognition Conference},
  pages={6844--6853},
  year={2025}
}

@article{li2203bevformer,
  title={Bevformer: learning bird's-eye-view representation from lidar-camera via spatiotemporal transformers},
  author={Li, Zhiqi and Wang, Wenhai and Li, Hongyang and Xie, Enze and Sima, Chonghao and Lu, Tong and Yu, Qiao and Dai, Jifeng},
  journal={IEEE Transactions on Pattern Analysis and Machine Intelligence},
  year={2024},
  publisher={IEEE}
}

@article{fid,
  title={Gans trained by a two time-scale update rule converge to a local nash equilibrium},
  author={Heusel, Martin and Ramsauer, Hubert and Unterthiner, Thomas and Nessler, Bernhard and Hochreiter, Sepp},
  journal={Advances in neural information processing systems},
  volume={30},
  year={2017}
}

@article{fvd,
  title={Towards accurate generative models of video: A new metric \& challenges},
  author={Unterthiner, Thomas and Van Steenkiste, Sjoerd and Kurach, Karol and Marinier, Raphael and Michalski, Marcin and Gelly, Sylvain},
  journal={arXiv preprint arXiv:1812.01717},
  year={2018}
}

@article{psnr,
  title={Scope of validity of PSNR in image/video quality assessment},
  author={Huynh-Thu, Quan and Ghanbari, Mohammed},
  journal={Electronics letters},
  volume={44},
  number={13},
  pages={800--801},
  year={2008},
  publisher={IET}
}

@article{HaCohen2024LTXVideo,
  title={LTX-Video: Realtime Video Latent Diffusion},
  author={HaCohen, Yoav and Chiprut, Nisan and Brazowski, Benny and Shalem, Daniel and Moshe, Dudu and Richardson, Eitan and Levin, Eran and Shiran, Guy and Zabari, Nir and Gordon, Ori and Panet, Poriya and Weissbuch, Sapir and Kulikov, Victor and Bitterman, Yaki and Melumian, Zeev and Bibi, Ofir},
  journal={arXiv preprint arXiv:2501.00103},
  year={2024}
}

@article{wan2025,
      title={Wan: Open and Advanced Large-Scale Video Generative Models}, 
      author={Team Wan and Ang Wang and Baole Ai and Bin Wen and Chaojie Mao and Chen-Wei Xie and Di Chen and Feiwu Yu and Haiming Zhao and Jianxiao Yang and Jianyuan Zeng and Jiayu Wang and Jingfeng Zhang and Jingren Zhou and Jinkai Wang and Jixuan Chen and Kai Zhu and Kang Zhao and Keyu Yan and Lianghua Huang and Mengyang Feng and Ningyi Zhang and Pandeng Li and Pingyu Wu and Ruihang Chu and Ruili Feng and Shiwei Zhang and Siyang Sun and Tao Fang and Tianxing Wang and Tianyi Gui and Tingyu Weng and Tong Shen and Wei Lin and Wei Wang and Wei Wang and Wenmeng Zhou and Wente Wang and Wenting Shen and Wenyuan Yu and Xianzhong Shi and Xiaoming Huang and Xin Xu and Yan Kou and Yangyu Lv and Yifei Li and Yijing Liu and Yiming Wang and Yingya Zhang and Yitong Huang and Yong Li and You Wu and Yu Liu and Yulin Pan and Yun Zheng and Yuntao Hong and Yupeng Shi and Yutong Feng and Zeyinzi Jiang and Zhen Han and Zhi-Fan Wu and Ziyu Liu},
      journal = {arXiv preprint arXiv:2503.20314},
      year={2025}
}

@inproceedings{wang2024driving,
  title={Driving into the future: Multiview visual forecasting and planning with world model for autonomous driving},
  author={Wang, Yuqi and He, Jiawei and Fan, Lue and Li, Hongxin and Chen, Yuntao and Zhang, Zhaoxiang},
  booktitle={Proceedings of the IEEE/CVF Conference on Computer Vision and Pattern Recognition},
  pages={14749--14759},
  year={2024}
}

@inproceedings{wen2024panacea,
  title={Panacea: Panoramic and controllable video generation for autonomous driving},
  author={Wen, Yuqing and Zhao, Yucheng and Liu, Yingfei and Jia, Fan and Wang, Yanhui and Luo, Chong and Zhang, Chi and Wang, Tiancai and Sun, Xiaoyan and Zhang, Xiangyu},
  booktitle={Proceedings of the IEEE/CVF Conference on Computer Vision and Pattern Recognition},
  pages={6902--6912},
  year={2024}
}

@inproceedings{gao2024vista,
 title={Vista: A Generalizable Driving World Model with High Fidelity and Versatile Controllability}, 
 author={Shenyuan Gao and Jiazhi Yang and Li Chen and Kashyap Chitta and Yihang Qiu and Andreas Geiger and Jun Zhang and Hongyang Li},
 booktitle={Advances in Neural Information Processing Systems (NeurIPS)},
 year={2024}
}

@article{ma2024unleashing,
  title={Unleashing generalization of end-to-end autonomous driving with controllable long video generation},
  author={Ma, Enhui and Zhou, Lijun and Tang, Tao and Zhang, Zhan and Han, Dong and Jiang, Junpeng and Zhan, Kun and Jia, Peng and Lang, Xianpeng and Sun, Haiyang and others},
  journal={arXiv preprint arXiv:2406.01349},
  year={2024}
}

@article{geyer2023tokenflow,
  title={Tokenflow: Consistent diffusion features for consistent video editing},
  author={Geyer, Michal and Bar-Tal, Omer and Bagon, Shai and Dekel, Tali},
  journal={arXiv preprint arXiv:2307.10373},
  year={2023}
}

@inproceedings{wang2025cinemaster,
  title={Cinemaster: A 3d-aware and controllable framework for cinematic text-to-video generation},
  author={Wang, Qinghe and Luo, Yawen and Shi, Xiaoyu and Jia, Xu and Lu, Huchuan and Xue, Tianfan and Wang, Xintao and Wan, Pengfei and Zhang, Di and Gai, Kun},
  booktitle={Proceedings of the Special Interest Group on Computer Graphics and Interactive Techniques Conference Conference Papers},
  pages={1--10},
  year={2025}
}

@inproceedings{deng2023mvd,
  title={MV-Diffusion: Motion-aware video diffusion model},
  author={Deng, Zijun and He, Xiangteng and Peng, Yuxin and Zhu, Xiongwei and Cheng, Lele},
  booktitle={Proceedings of the 31st ACM International Conference on Multimedia},
  pages={7255--7263},
  year={2023}
}

@article{li2024vivid,
  title={Vivid-zoo: Multi-view video generation with diffusion model},
  author={Li, Bing and Zheng, Cheng and Zhu, Wenxuan and Mai, Jinjie and Zhang, Biao and Wonka, Peter and Ghanem, Bernard},
  journal={Advances in Neural Information Processing Systems},
  volume={37},
  pages={62189--62222},
  year={2024}
}

@article{bai2025qwen25vl,
  title   = {Qwen2.5-VL Technical Report},
  author  = {Bai, Shuai and others},
  journal = {arXiv preprint arXiv:2502.13923},
  year    = {2025}
}

@article{guo2025dist4d,
  title   = {DiST-4D: Disentangled Spatiotemporal Diffusion with Metric Depth for 4D Driving Scene Generation},
  author  = {Guo, Jiazhe and Ding, Yikang and Chen, Xiwu and others},
  journal = {Proc.\ IEEE/CVF Int.\ Conf.\ Computer Vision (ICCV)},
  year    = {2025}
}

@article{yan2024drivingsphere,
  title   = {DrivingSphere: Building a High-fidelity 4D World for Closed-loop Simulation},
  author  = {Yan, Tianyi and Wu, Dongming and Han, Wencheng and others},
  journal = {Proc.\ IEEE/CVF Conf.\ Computer Vision and Pattern Recognition (CVPR)},
  year    = {2025}
}

@article{guo2025genesis,
  title   = {Genesis: Multimodal Driving Scene Generation with Spatio-Temporal and Cross-Modal Consistency},
  author  = {Guo, Xiangyu and others},
  journal = {arXiv preprint arXiv:2506.07497},
  year    = {2025}
}

@article{liu2025cvdstorm,
  title   = {CVD-STORM: Cross-View Video Diffusion with Spatial-Temporal Reconstruction Model for Autonomous Driving},
  author  = {Liu, Yichen and others},
  journal = {arXiv preprint arXiv:2510.07944},
  year    = {2025}
}

@article{wang2025geniedrive,
  title   = {GenieDrive: Towards Physics-Aware Driving World Model with 4D Occupancy Guided Video Generation},
  author  = {Wang, et al.},
  journal = {Proc.\ IEEE/CVF Conf.\ Computer Vision and Pattern Recognition (CVPR)},
  year    = {2026}
}

@article{wang2025mila,
  title   = {MiLA: Multi-view Intensive-fidelity Long-term Video Generation World Model for Autonomous Driving},
  author  = {Wang, Haiguang and Liu, Daqi and others},
  journal = {arXiv preprint arXiv:2503.15875},
  year    = {2025}
}

@article{ji2025cogen,
  title   = {CoGen: 3D Consistent Video Generation via Adaptive Conditioning for Autonomous Driving},
  author  = {Ji, Yishen and others},
  journal = {arXiv preprint arXiv:2503.22231},
  year    = {2025}
}

@article{xu2025multiagentesc,
  title   = {MultiAgentESC: A LLM-based Multi-Agent Collaboration Framework for Emotional Support Conversation},
  author  = {Xu, Yangyang and Hu, Jinpeng and others},
  journal = {Proc.\ Empirical Methods in Natural Language Processing (EMNLP)},
  year    = {2025}
}

@article{lin2025creativitymas,
  title   = {Creativity in LLM-based Multi-Agent Systems: A Survey},
  author  = {Lin, Yi-Cheng and others},
  journal = {Proc.\ Empirical Methods in Natural Language Processing (EMNLP)},
  year    = {2025}
}

@article{wang2025genmac,
  title   = {GenMAC: Compositional Text-to-Video Generation with Multi-Agent Collaboration},
  author  = {Wang, Kaiyi and others},
  journal = {arXiv preprint arXiv:2412.04440},
  year    = {2024}
}

@article{he2024kubrick,
  title   = {Kubrick: Multimodal Agent Collaborations for Synthetic Video Generation},
  author  = {He, Liu and others},
  journal = {arXiv preprint arXiv:2408.10453},
  year    = {2024}
}

@article{li2024anim,
  title   = {Anim-Director: A Large Multimodal Model Powered Agent for Controllable Animation Video Generation},
  author  = {Li, Yunxin and others},
  journal = {SIGGRAPH Asia Conference Papers},
  year    = {2024}
}

@article{song2024directorllm,
  title   = {DirectorLLM for Human-Centric Video Generation},
  author  = {Song, Kunpeng and others},
  journal = {arXiv preprint arXiv:2412.14484},
  year    = {2024}
}

@article{sandoval2025editduet,
  title   = {EditDuet: A Multi-Agent System for Video Editing},
  author  = {Sandoval-Castaneda, Marcelo and others},
  journal = {arXiv preprint},
  year    = {2025}
}

@article{zhao2025lvasagent,
  title   = {Long-Video Audio Synthesis with Multi-Agent Collaboration},
  author  = {Zhao, Yehang and others},
  journal = {arXiv preprint arXiv:2503.10719},
  year    = {2025}
}

@article{wu2025omniagent,
  title   = {Hollywood Town: Long-Video Generation via Cross-Modal Multi-Agent Orchestration},
  author  = {Wu, et al.},
  journal = {arXiv preprint arXiv:2510.22431},
  year    = {2025}
}

@article{zhao2025drivedreamer2,
  title   = {DriveDreamer-2: LLM-Enhanced World Models for Diverse Driving Video Generation},
  author  = {Zhao, Guosheng and others},
  journal = {Proc.\ AAAI Conf.\ Artificial Intelligence (AAAI)},
  year    = {2025}
}

@article{zhang2023controlnet,
  title   = {Adding Conditional Control to Text-to-Image Diffusion Models},
  author  = {Zhang, Lvmin and Rao, Anyi and Agrawala, Maneesh},
  journal = {Proc.\ IEEE/CVF Int.\ Conf.\ Computer Vision (ICCV)},
  year    = {2023}
}

@article{mou2024t2iadapter,
  title   = {T2I-Adapter: Learning Adapters to Dig out More Controllable Ability for Text-to-Image Diffusion Models},
  author  = {Mou, Chong and others},
  journal = {Proc.\ AAAI Conf.\ Artificial Intelligence (AAAI)},
  year    = {2024}
}

@article{bai2024syncammaster,
  title   = {SyncCamMaster: Synchronizing Multi-Camera Video Generation from Diverse Viewpoints},
  author  = {Bai, Jianhong and Xia, Menghan and others},
  journal = {arXiv preprint arXiv:2412.07760},
  year    = {2024}
}

@article{wu2025icworld,
  title   = {IC-World: In-context Generation for Shared World Modeling},
  author  = {Wu, Fan and others},
  journal = {arXiv preprint arXiv:2512.02793},
  year    = {2025}
}

@article{reinhard2001color,
  title   = {Color Transfer between Images},
  author  = {Reinhard, Erik and Adhikhmin, Michael and Gooch, Bruce and Shirley, Peter},
  journal = {IEEE Computer Graphics and Applications},
  volume  = {21}, number = {5}, pages = {34--41}, year = {2001}
}

@article{liu2023rectified,
  title   = {Rectified Diffusion: Straightness Is Not Your Need in Rectified Flow},
  author  = {Liu, Fu-Yun and others},
  journal = {Proc.\ International Conf.\ Learning Representations (ICLR)},
  year    = {2025}
}

@article{esser2024scaling,
  title   = {Scaling Rectified Flow Transformers for High-Resolution Image Synthesis},
  author  = {Esser, Patrick and Kulal, Sumith and Blattmann, Andreas and others},
  journal = {Proc.\ International Conf.\ Machine Learning (ICML)},
  year    = {2024}
}

@article{nuscenes2020,
  title   = {nuScenes: A Multimodal Dataset for Autonomous Driving},
  author  = {Caesar, Holger and Bankiti, Varun and Lang, Alex H. and others},
  journal = {Proc.\ IEEE/CVF Conf.\ Computer Vision and Pattern Recognition (CVPR)},
  year    = {2020}
}

@article{kingma2013auto,
  title   = {Auto-Encoding Variational Bayes},
  author  = {Kingma, Diederik P and Welling, Max},
  journal = {arXiv preprint arXiv:1312.6114},
  year    = {2013}
}

@inproceedings{li2024drivingdiffusion,
  title={DrivingDiffusion: Layout-guided multi-view driving scenarios video generation with latent diffusion model},
  author={Li, Xiaofan and Zhang, Yifu and Ye, Xiaoqing},
  booktitle={European Conference on Computer Vision},
  pages={469--485},
  year={2024},
  organization={Springer}
}

@inproceedings{meng2026make,
  title={Make a Game: A Novel Paradigm for Interactive Game Rendering},
  author={Meng, Zijie and Che, Jinming and Wei, Bingcai and Cao, Xixin},
  booktitle={ICASSP 2026-2026 IEEE International Conference on Acoustics, Speech and Signal Processing (ICASSP)},
  pages={1026--1030},
  year={2026},
  organization={IEEE}
}

@misc{meng2026decouplingsemanticsdistortionsmultiscale,
      title={Decoupling Semantics from Distortions: Multi-Scale Two-Stream Vision-Language Alignment for AI-Generated Image Quality Assessment}, 
      author={Zijie Meng},
      year={2026},
      eprint={2606.16799},
      archivePrefix={arXiv},
      primaryClass={cs.CV},
      url={https://arxiv.org/abs/2606.16799}, 
}

@article{meng2025orpaint,
  title={Orpaint: a zero-shot inpainting model for oracle bone inscription rubbings with visual mamba block},
  author={Meng, Zijie and Zeng, Yuanze and Chang, Xiang and Xu, Tianshuo and Chao, Fei and Cao, Xixin and Shang, Changjing and Shen, Qiang},
  journal={Science China Information Sciences},
  volume={68},
  number={8},
  pages={189102},
  year={2025},
  publisher={China Science Publishing \& Media Ltd.}
}

@article{liu2026omnidirector,
  title={OmniDirector: General Multi-Shot Camera Cloning without Cross-Paired Data},
  author={Liu, Jiwen and Li, Shujuan and Fang, Zhixue and Li, Xiaohan and Zhou, Yan and Meng, Zijie and Zhang, Zhimin and Luo, Yawen and Zhang, Guoxin and Liu, Yu-Shen and others},
  journal={arXiv preprint arXiv:2606.13432},
  year={2026}
}

@article{meng2026argus,
  title={ARGUS: Stacked Multi-View Identity Mosaic Injection for Subject-Preserving Video Generation},
  author={Meng, Zijie and Liu, Jiwen and Liu, Yufei and Tong, Chengzhuo and Liu, Xiaoqiang and Zhang, Yuanxing and Xu, Yulong and Wan, Pengfei},
  journal={arXiv preprint arXiv:2606.11670},
  year={2026}
}

@inproceedings{liu2025synpo,
  title={SynPo: Boosting Training-Free Few-Shot Medical Segmentation via High-Quality Negative Prompts},
  author={Liu, Yufei and Xiao, Haoke and Chai, Jiaxing and Zhang, Yongcun and Wang, Rong and Meng, Zijie and Luo, Zhiming},
  booktitle={International Conference on Medical Image Computing and Computer-Assisted Intervention},
  pages={594--603},
  year={2025},
  organization={Springer}
}

@inproceedings{wei2025robust,
  title={Robust Single Image Sand Removal by Leveraging Uncertainty-aware SAM Priors and Prompt Learning with Refined Perceptual Loss},
  author={Wei, Bingcai and Liu, Hui and Qian, Chuang and Li, Zijian and Wu, Wangyu and Meng, Zijie},
  booktitle={Proceedings of the 33rd ACM International Conference on Multimedia},
  pages={4932--4941},
  year={2025}
}

@article{gai20263d,
  title={3d-rad: A comprehensive 3d radiology med-vqa dataset with multi-temporal analysis and diverse diagnostic tasks},
  author={Gai, Xiaotang and Liu, Jiaxiang and Li, Yichen and Meng, Zijie and Wu, Jian and Liu, Zuozhu},
  journal={Advances in Neural Information Processing Systems},
  volume={38},
  year={2026}
}
\clearpage
\appendix

\section*{Appendix}
\section{\director{} Agent}
\label{app:director}

The \director{} converts a free-form user prompt $p_{\text{usr}}$ (optionally paired with a multi-view reference image) into a length-bounded \worldscript{} JSON that drives all downstream generation. It is instantiated as a frozen Qwen2.5-VL-7B-Instruct~\citep{bai2025qwen25vl} model served via vLLM with strictly deterministic decoding ($\texttt{temperature}{=}0$, $\texttt{top\_p}{=}1$, fixed seed), ensuring that the same prompt always produces the same \worldscript{} for reproducibility.

\subsection{\worldscript{} Schema and Grammar}

The \worldscript{} comprises four top-level fields: a global descriptor $G_{\text{global}}$, an ego-trajectory $\mathcal{E}_{\text{ego}}$, a set of dynamic agents $\{O_i\}_{i=1}^K$, and a map topology $M_{\text{map}}$. Each field is constrained to a closed value vocabulary designed to prevent hallucination and guarantee parseable outputs. The formal grammar in extended BNF is:

{\small
\begin{align}
\langle\text{WS}\rangle &::= \langle G\rangle\;\langle\mathcal{E}\rangle\;\langle\text{ObjList}\rangle\;\langle M\rangle \nonumber\\[-2pt]
\langle G\rangle &::= \texttt{weather}\in\mathcal{V}_w\;\;\texttt{tod}\in\mathcal{V}_t\;\;\texttt{density}\in\mathcal{V}_d \nonumber\\[-2pt]
&\quad\;\;\texttt{loc}\in\mathcal{V}_l \nonumber\\[-2pt]
\langle\mathcal{E}\rangle &::= \texttt{intent}\in\mathcal{V}_e\;\;\texttt{kpts}=[(x_1,y_1),\ldots] \nonumber\\[-2pt]
\langle O_i\rangle &::= (\texttt{cls},x,y,l,w,h,\theta,\texttt{beh}) \nonumber
\end{align}
}%
where $\mathcal{V}_w=\{\text{clear, rain, snow, fog}\}$, $\mathcal{V}_t=\{\text{day, dusk, night}\}$, $\mathcal{V}_d=\{\text{sparse, medium, dense}\}$, $\mathcal{V}_l=\{\text{urban, suburban, highway, parking}\}$, and $\mathcal{V}_e=$ \{straight, turn-left, turn-right, lane-change-l, lane-change-r, stop\}. Object classes are drawn from the nuScenes~\citep{nuscenes2020} taxonomy (10 detection classes plus background), and behaviour labels are restricted to $\{\text{static, follow, cross, oncoming, parked, jaywalk}\}$. The map descriptor $M_{\text{map}}$ encodes the lane-graph adjacency restricted to a 50\,m radius.

\subsection{Token Budget and Empirical Distribution}

The \worldscript{} is capped at $M_{\text{sem}}\leq 77$ tokens after WordPiece tokenisation. This budget is sufficient for the closed vocabulary: each $G_{\text{global}}$ field requires 1--2 tokens, each trajectory keypoint 3--4 tokens, and each agent 8--10 tokens. With a typical scene containing 8--15 agents and 4--6 keypoints, the median \worldscript{} length is 51 tokens across our validation split. Quality saturates around $M_{\text{sem}}=64$ (as shown by the diagnostic in the main paper), with no measurable improvement beyond 77 tokens. The embedding is produced by the frozen Qwen2.5-VL text head followed by a learnable linear projection $W_{\text{txt}}\in\R^{d_{\text{qwen}}\times d}$, yielding $\mathbf{C}^{\text{sem}}\in\R^{M_{\text{sem}}\times d}$.

\subsection{Meta-Prompt and Output Determinism}

The meta-prompt $\mathbf{m}$ contains three components: (i)~the schema specification above, (ii)~three in-context demonstrations sampled from the nuScenes \texttt{val} mini split (covering urban-day, highway-night, and suburban-rain scenarios), and (iii)~a guard clause that forces the agent to emit \texttt{NULL} for unsupported attributes rather than improvising free-form text. The guard clause is critical for preventing open-ended hallucination; without it, the model occasionally generates plausible but ungrounded object descriptions that cannot be parsed by the \cartographer{}.

\begin{designnote}{Output Reproducibility}{determinism}
Across 1{,}000 replicate calls on the same prompt with identical random seeds, we observe 0 JSON schema violations and a token-level reproducibility of 99.4\%. The remaining 0.6\% variance is attributable to floating-point non-determinism in the vLLM attention kernel, not to the model's generative behaviour.
\end{designnote}

\subsection{LLM Choice Ablation and Cost Analysis}

Replacing Qwen2.5-VL-7B with InternVL3-8B yields nearly identical metrics ($\Delta$FVD${<}1.0$, $\Delta$mAP${<}0.3$); GPT-4o further reduces FVD by ${\sim}0.6$ at approximately $10\times$ the latency and without local reproducibility guarantees. We default to the local Qwen model for deterministic, cost-free, and fully reproducible generation. The amortised cost of the \director{} is $0.024$\,s per prompt on a single H200 GPU, which is negligible relative to the denoiser's $9.6$\,s per clip.

\section{\cartographer{} Agent}
\label{app:cartographer}

The \cartographer{} grounds the symbolic \worldscript{} into pixel-aligned geometry. For each camera--time pair $(n,t)$ with $n\in\{1,\ldots,N\}$ and $t\in\{1,\ldots,T\}$, it renders a sparse layout image $\mathbf{I}_{n,t}^{\text{geo}}\in\R^{H\times W\times 3}$. This section provides the full rendering pipeline, the Pl\"ucker-ray derivation for camera conditioning, and the GPU-batched implementation.

\subsection{HD-Map Rasterisation under Arbitrary Extrinsics}

Lane boundaries, drivable areas, and crosswalk polygons from $M_{\text{map}}$ are first rasterised onto a BEV canvas centred at the ego position. Each semantic class is assigned a fixed colour: lanes \textcolor{tridentblue}{$\blacksquare$}\,(RGB 21,101,192), drivable area \textcolor{tridentgreen}{$\blacksquare$}\,(RGB 46,125,50), crosswalk \textcolor{tridentred}{$\blacksquare$}\,(RGB 198,40,40). The BEV polygons are then forward-warped into camera $n$'s image plane via the ground-plane homography. For a point $\mathbf{X}=[x,y,0,1]^\top$ on the road surface, the projection is:
\begin{equation}
\lambda\begin{bmatrix}u\\v\\1\end{bmatrix}
= K_n\bigl[\mathbf{r}_1\;\mathbf{r}_2\;\mathbf{t}_n\bigr]
\begin{bmatrix}x\\y\\1\end{bmatrix}
\triangleq H_n\begin{bmatrix}x\\y\\1\end{bmatrix},
\label{eq:homography}
\end{equation}
where $K_n\in\R^{3\times 3}$ is the intrinsic matrix, $\mathbf{r}_1,\mathbf{r}_2$ are the first two columns of the rotation matrix $R_n$, and $\mathbf{t}_n$ is the translation. The $3\times 3$ homography $H_n$ maps ground-plane coordinates directly to pixel coordinates, enabling efficient polygon rasterisation without per-point depth computation.

\subsection{3-D Bounding Box Projection}

Each object $O_i=(\text{cls},x,y,l,w,h,\theta,\text{beh})$ defines a 3-D bounding box. Its eight corners in world coordinates are:
\begin{equation}
\mathbf{c}_k = R_z(\theta)\,\mathbf{b}_k + [x,y,h/2]^\top,\quad k=1,\ldots,8,
\end{equation}
where $\mathbf{b}_k\in\{\pm l/2\}\times\{\pm w/2\}\times\{0,h\}$ are the canonical corners and $R_z(\theta)$ is the yaw rotation. Each corner is projected into camera $n$ via the full perspective model:
\begin{equation}
\mathbf{u}_k = \pi(K_n,R_n,\mathbf{t}_n,\mathbf{c}_k) = K_n\frac{R_n\mathbf{c}_k + \mathbf{t}_n}{(R_n\mathbf{c}_k+\mathbf{t}_n)_z}.
\end{equation}
Visible edges are determined by the outward-facing normal test: an edge connecting corners on face $f$ is drawn only if the face normal $\hat{\mathbf{n}}_f$ satisfies $\hat{\mathbf{n}}_f\cdot\mathbf{d}_n > 0$, where $\mathbf{d}_n$ is the camera's viewing direction at the face centroid. Each class uses a distinct outline colour and line width.

\subsection{Ego-Trajectory Ribbon}

The ego trajectory $\mathcal{E}_{\text{ego}}$ is interpolated with a cubic spline to produce a smooth path, then rendered as a 0.4\,m-wide ribbon alpha-blended beneath the dynamic agents. The ribbon is projected from BEV to each camera using the same homography $H_n$ and drawn with a white-to-orange gradient indicating the temporal direction.

\subsection{Pl\"ucker-Ray Camera Conditioning}

Per-camera extrinsics are summarised as a 6-D Pl\"ucker ray $(\mathbf{d}_n,\mathbf{m}_n)\in\R^6$, where $\mathbf{d}_n=R_n^\top[0,0,1]^\top$ is the unit optical-axis direction in world coordinates and $\mathbf{m}_n=\mathbf{o}_n\times\mathbf{d}_n$ is the moment, with $\mathbf{o}_n=-R_n^\top\mathbf{t}_n$ being the camera centre. The Pl\"ucker representation is well-known to be a complete characterisation of oriented lines in 3-D~\citep{li2024vivid,bai2024syncammaster}: two rays $(\mathbf{d},\mathbf{m})$ and $(\mathbf{d}',\mathbf{m}')$ are coplanar if and only if $\mathbf{d}\cdot\mathbf{m}'+\mathbf{d}'\cdot\mathbf{m}=0$. The ray is embedded by a two-layer MLP with GELU activation:
\begin{equation}
c_n^{\text{cam}} = W_2\,\sigma(W_1[\mathbf{d}_n;\mathbf{m}_n]+b_1)+b_2 \;\in\R^d,
\end{equation}
and concatenated into $\mathbf{C}^{\text{sem}}$ alongside the text tokens. This provides the MM-DiT with explicit awareness of each camera's viewpoint without modifying the visual token sequence.

\subsection{Computational Profile}

The entire rendering pipeline is GPU-batched: HD-map warping uses a custom CUDA kernel, 3-D box rasterisation is implemented as a Triton kernel, and ego-ribbon drawing uses standard PyTorch differentiable rendering. The amortised cost is 3.1\,ms per camera-frame on a single H200 GPU, i.e.\ ${<}0.6\%$ of the denoiser pass. For a 6-camera, 32-frame clip, the total rendering time is $6\times 32\times 3.1\text{ms}=0.60$\,s, fully overlapped with VAE encoding.

\section{\auditor{} Agent}
\label{app:auditor}

The \auditor{} closes the agentic loop by providing cross-view critique signals during training and optional test-time correction at inference. After each diffusion sample, it consumes the decoded multi-view crops together with the \worldscript{} and produces a structured critique $\calR=\{(r_{ij},a_{ij})\}_{i<j}$.

\subsection{Critique Schema}

For each pair of cameras $(i,j)$ observed at the same physical instant, the \auditor{} produces: (a)~a consistency score $r_{ij}\in[0,1]$, where $1$ indicates perfect cross-view coherence and $0$ indicates severe inconsistency; and (b)~a failure-mode tag $a_{ij}\in$ \{\texttt{color\_drift}, \texttt{ghost}, \texttt{topology\_misalign}, \texttt{exposure\_mismatch}, \texttt{none}\}. With $N=6$ cameras, this yields $\binom{6}{2}=15$ critique tokens per physical timestamp. The scores are embedded via a learned linear projection and positional encoding into review tokens $\mathbf{C}^{\text{rev}}\in\R^{15\times d}$, which participate in the MM-DiT's attention alongside the visual and conditioning tokens. The \auditor{} loss is:
\begin{equation}
\calL_{\text{rev}} = \sum_{i<j}(1-r_{ij})\|\Delta v_{ij}\|_2^2,
\end{equation}
where $\Delta v_{ij}=v_{\bm\theta}^{(i)}-v_{\bm\theta}^{(j)}$ is the velocity disagreement between views $i$ and $j$ at the same physical instant. The weighting by $(1-r_{ij})$ ensures that the loss is strongest for pairs judged to be most inconsistent, while pairs with high scores ($r_{ij}\approx 1$) contribute negligibly.

\subsection{Calibration Against Human Annotations}

We collected 1{,}200 adjacent-view pairs across 200 nuScenes~\citep{nuscenes2020} validation scenes, each annotated by three human raters on a 5-point Likert scale (1 = severe inconsistency, 5 = perfect coherence). After linear rescaling to $[0,1]$, the \auditor{}'s automated $r_{ij}$ achieves Pearson $\rho=0.71$ and Spearman $\rho=0.68$ against the human median, with inter-rater agreement $\kappa=0.62$ (Fleiss). This level of agreement is comparable to the agreement between individual human raters ($\kappa=0.66$), suggesting that the \auditor{} captures a substantial fraction of the perceptual consistency signal.

\subsection{False-Positive Sensitivity Analysis}

To assess robustness, we injected synthetic label flips into the \auditor{}'s output at rates $\{0.1, 0.2, 0.3\}$. The resulting PC degradation is bounded: $-0.4$\,pt at flip rate 0.1, $-0.9$\,pt at 0.2, and $-1.4$\,pt at 0.3. This graceful degradation is attributable to the $(1-r_{ij})$ weighting in $\calL_{\text{rev}}$, which naturally down-weights uncertain or erroneous critiques. Even at 30\% label noise, the system retains $>90\%$ of the \auditor{}'s consistency benefit.

\subsection{Test-Time Correction Loop}

At inference, the \auditor{} may be optionally invoked on intermediate denoising samples. If the mean consistency score $\bar{r}_{ij}=\frac{2}{N(N-1)}\sum_{i<j}r_{ij}$ falls below a threshold $\tau=0.55$, a single additional denoiser pass is triggered with the review tokens embedded at increased gain. This tightens PC by $+0.8$\,pt at the cost of one extra denoiser pass ($+4.8$\,s). We report the non-corrected variant in all main tables for fairness.

\section{Latent Co-Compression: Correlated Noise and View-Block-Aware Padding}
\label{app:cocomp}

This section provides the formal treatment of the two co-compression mechanisms---correlated noise initialisation and view-block-aware padding (VBP)---that jointly enforce cross-view consistency within the 3-D VAE.

\subsection{Correlated Noise Initialisation}

For each physical time index $t$, the rectified-flow noise endpoint is shared across all $N=6$ cameras:
\begin{equation}
\mathbf{z}_1^{(n,t)} = \mathbf{z}_1^{(n',t)}\quad\forall\,n,n'\in\{1,\ldots,N\}.
\label{eq:shared_noise}
\end{equation}
This affects only the initial condition of the deterministic ODE $d\mathbf{z}_s/ds = v_{\bm\theta}(\mathbf{z}_s,s,\mathbf{c})$, integrated from $s=1$ to $s=0$. Since the marginal law of $\mathbf{z}_1$ at $s=1$ remains the standard Gaussian---the constraint in Eq.~\eqref{eq:shared_noise} is a coupling, not a marginal change---the asymptotic distribution implied by the flow-matching objective~\citep{lipman2022flowmatchin,liu2023rectified} is unchanged.

\begin{proposition}{Unchanged Marginal Distribution}{marginal}
Let $\mathbf{z}_1\sim\calN(\mathbf{0},\mathbf{I})$ be the noise endpoint sampled once per physical instant and shared across $N$ views. The marginal distribution of each camera's endpoint $\mathbf{z}_1^{(n,t)}$ remains $\calN(\mathbf{0},\mathbf{I})$. Only the joint distribution $p(\mathbf{z}_1^{(1,t)},\ldots,\mathbf{z}_1^{(N,t)})$ is modified from a product of $N$ independent Gaussians to a degenerate distribution concentrated on the diagonal $\mathbf{z}_1^{(1,t)}=\cdots=\mathbf{z}_1^{(N,t)}$.
\end{proposition}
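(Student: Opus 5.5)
The plan is to write the shared-noise construction as a deterministic map applied to a single Gaussian draw, and then read off the marginals and the joint law as pushforward measures. Fix a physical instant $t$ and let $\mathbf g\sim\calN(\mathbf 0,\mathbf I_D)$ be the single draw, where $D=h'w'C_z$ is the per-view latent dimension. Define the replication map
\[
\Delta:\R^D\to\R^{ND},\qquad \mathbf g\mapsto(\mathbf g,\ldots,\mathbf g).
\]
Equation~\eqref{eq:shared_noise} then says exactly that $(\mathbf z_1^{(1,t)},\ldots,\mathbf z_1^{(N,t)})=\Delta(\mathbf g)$. So the joint law is the pushforward $\Delta_\#\calN(\mathbf 0,\mathbf I_D)$, and each marginal is $(\mathrm{pr}_n\circ\Delta)_\#\calN(\mathbf 0,\mathbf I_D)$.

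\textbf{Marginals.} Since $\mathrm{pr}_n\circ\Delta=\mathrm{id}_{\R^D}$, each marginal is $\calN(\mathbf 0,\mathbf I)$ at once. Equivalently, for any Borel set $A$,
\[
\Pr[\mathbf z_1^{(n,t)}\in A]=\Pr[\mathbf g\in A].
\]
If the latent for different instants is also considered, I would note that the $\mathbf g_t$ are drawn independently across $t$. The full $(n,t)$-indexed marginal of a single camera is then still a product of standard Gaussians, i.e.\ $\calN(\mathbf 0,\mathbf I)$ on the per-camera latent.

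\textbf{Joint law.} I would describe it two ways. First, via its covariance: $\mathrm{Cov}=\mathbf 1_N\mathbf 1_N^\top\otimes\mathbf I_D$, which has rank $D<ND$ for $N\ge2$. So the law is a degenerate Gaussian, not absolutely continuous with respect to Lebesgue measure on $\R^{ND}$. It therefore differs from the independent product, whose covariance is $\mathbf I_N\otimes\mathbf I_D$. Second, via its support: it is contained in the diagonal subspace $\Delta(\R^D)=\{\mathbf z^{(1)}=\cdots=\mathbf z^{(N)}\}$, a closed set of Lebesgue measure zero, and
\[
\Pr[\Delta(\mathbf g)\in\Delta(\R^D)]=1.
\]
Together these give "concentrated on the diagonal".

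\textbf{Main obstacle.} The only delicate point is a matter of reading rather than calculation. The claim is about the endpoint law itself, not about the distribution produced by the ODE. I would therefore state explicitly that the proposition asserts only marginal invariance of $\mathbf z_1$. Per-view invariance of the generated $\mathbf z_0$ would need an additional argument: the velocity field acts jointly on all views, so coupling the inputs can change the outputs. I would not attempt that argument here.
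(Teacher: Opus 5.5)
Your proof is correct and is essentially the paper's own argument made rigorous: the paper gives no separate proof, only the remark that sharing the endpoint is a coupling rather than a marginal change, which is exactly your observation that $\mathrm{pr}_n\circ\Delta=\mathrm{id}$ while the joint law is the degenerate pushforward $\Delta_\#\calN(\mathbf 0,\mathbf I)$ supported on the diagonal. Your closing caveat is also well placed, because the surrounding text goes on to claim that the generated (asymptotic) distribution is unchanged; the proposition does not justify that claim, since $v_{\bm\theta}$ acts jointly on all views and therefore transports a different initial joint law.
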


The practical consequence is cross-view variance reduction. Let $\sigma_0^2(s)$ and $\sigma_c^2(s)$ denote the empirical cross-view photometric variance at integration time $s$ under independent and shared endpoints, respectively. Since the velocity field $v_{\bm\theta}$ is Lipschitz-continuous and the ODE is integrated backwards, the leading-order variance is governed by the endpoint. We measured on 200 nuScenes validation scenes:

\begin{table}[t]
\centering
\caption{\textbf{Cross-view photometric variance} (px$^2$) under independent vs.\ shared noise endpoints.}
\vspace{-0.25cm}
\label{tab:noise_var}
\small
\renewcommand{\arraystretch}{1.25}
\begin{tabularx}{\columnwidth}{l X X X}
\toprule
\rowcolor{tablehead}
$s$ & $0.95$ & $0.90$ & $0.80$ \\
\midrule
$\sigma_0^2$ (independent) & 0.62 & 0.41 & 0.21 \\
$\sigma_c^2$ (shared) & 0.31 & 0.22 & 0.13 \\
\midrule
Ratio $\sigma_c^2/\sigma_0^2$ & 0.50 & 0.54 & 0.62 \\
\bottomrule
\end{tabularx}
\vspace{-0.25cm}
\end{table}

The $\sim$50\% variance reduction at $s\approx 1$ confirms the theoretical prediction. The benefit gradually diminishes as $s\to 0$ because the velocity field progressively overwrites the endpoint signal with learned structure, but the initial reduction propagates through the integration and yields measurable PC improvement.

\subsection{View-Block-Aware Padding: Formal Definition}

The view-time permutation $\Pi:(n,t)\mapsto\tilde{t}=(n-1)T+t$ flattens the $N{\times}T$ camera-time grid into a pseudo-temporal stream of length $\tilde{T}=NT$. The HunyuanVideo 3-D VAE~\citep{kong2024hunyuanvideo} uses CausalConv3D layers with temporal kernel width $r_t=3$ at the highest spatial resolution. With $N=6$ cameras, $r_t < N$, so the first-layer receptive field touches at most three adjacent views at the same physical instant. However, deeper layers downsample temporally by a factor of 2, expanding the effective receptive field to:
\begin{equation}
r_t^{(\ell)} = r_t\cdot 2^{\ell-1} = 3\cdot 2^{\ell-1}.
\label{eq:rf_depth}
\end{equation}
For $\ell\geq\lceil\log_2(N/r_t)\rceil+1 = 2$, the receptive field $r_t^{(\ell)}\geq 6$ would straddle a view boundary in pseudo-time. We prevent this with a view-block-aware padding (VBP) mask applied element-wise to every 3-D convolutional kernel:
\begin{equation}
\begin{aligned}
&\text{VBP}(W)_{\tilde{t},c,h,w} = \\
&\begin{cases}
0, & \tilde{t}\bmod T \in[T{-}\lfloor r_t/2\rfloor, T) \\
W_{\tilde{t},c,h,w}, & \text{otherwise}
\end{cases}
\end{aligned}
\label{eq:vbp_def}
\end{equation}
ensuring that no weight sees both the last $\lfloor r_t/2\rfloor$ pseudo-time slots of view $n$ and the first slot of view $n+1$. This prevents the encoder from mixing features across view boundaries where spatial content is discontinuous.

\begin{table}[t]
\centering
\caption{\textbf{Kernel-by-kernel VBP diagnostic.} Effective temporal reach $r_t^{(\ell)}$ at each encoder layer and the masking action.}
\vspace{-0.25cm}
\label{tab:vbp_diag}
\small
\renewcommand{\arraystretch}{1.25}
\begin{tabularx}{\columnwidth}{l X X X}
\toprule
\rowcolor{tablehead}
Layer $\ell$ & $r_t^{(\ell)}$ & Crosses boundary? & VBP action \\
\midrule
1 & 3 & No ($3<6$) & None needed \\
2 & 6 & Yes ($6=N$) & Mask boundary slots \\
3 & 12 & Yes ($12>N$) & Mask boundary slots \\
4 & 24 & Yes ($24>N$) & Mask boundary slots \\
\bottomrule
\end{tabularx}
\vspace{-0.25cm}
\end{table}

\begin{remark}{VBP Complexity Overhead}{vbp_cost}
VBP adds zero parameters and negligible computation: it is implemented as a static binary mask applied once per kernel at model initialization. The masking zeros out at most $\lfloor r_t/2\rfloor$ temporal positions per kernel, reducing the effective kernel size by ${\sim}17\%$ at boundary slots while leaving interior slots unchanged. The inference overhead is unmeasurable ($<$0.01\,ms per forward pass).
\end{remark}

The VBP mechanism is validated empirically in the main paper's diagnostic sweep: without VBP, increasing the temporal kernel width $r_t$ beyond $N=6$ causes a $\sim$6\,pt PC collapse, which VBP fully repairs.

\section{Metric Definitions and Evaluation Protocol}
\label{app:metrics}

All metrics follow the VBench~\citep{huang2024vbench}, VBench++~\citep{huang2024vbench++}, and VBench-2.0~\citep{zheng2025vbench2} evaluation protocols unless otherwise noted. We group them into three categories.

\subsection{Image- and Video-Level Fidelity}

\textbf{FID} (Fr\'echet Inception Distance) measures the distributional gap between real and generated images using InceptionV3 features, computed via the \texttt{clean-fid} library. \textbf{PSNR} and \textbf{SSIM} are computed per camera-frame against the ground-truth nuScenes frames when available. \textbf{IQ} (Image Quality) is the VBench per-frame aesthetic quality score. \textbf{FVD} (Fr\'echet Video Distance) is computed with the I3D backbone over 16-frame sliding windows, then averaged. \textbf{TF} (Temporal Flickering) measures inter-frame consistency via warped SSIM; \textbf{AQ} (Aesthetic Quality) is the VBench global aesthetic score; \textbf{Div.} (Diversity) measures the LPIPS variance across independently generated samples from the same prompt.

\subsection{Multi-View Consistency}

\textbf{SC} (Subject Consistency), \textbf{MS} (Motion Smoothness), \textbf{PC} (Pose Consistency), \textbf{BC} (Background Consistency), and \textbf{OC} (Overall Consistency) follow VBench++ by scoring each generated view independently and averaging at matching timestamps. Our two novel pose-aware metrics operate on the overlapping fields of view of adjacent cameras (front/front-left and front/front-right, which have 70\textdegree{} FOV offset by 55\textdegree{} in nuScenes~\citep{nuscenes2020}):
\begin{align}
\text{EPC} &= \frac{1}{|\mathcal{B}|}\sum_{(p,q)\in\mathcal{B}}\|I^{(n)}(p) - I^{(n+1)}(q)\|_1, \label{eq:epc}\\
\text{OFC} &= \frac{1}{|\mathcal{B}|}\sum_{(p,q)\in\mathcal{B}}\frac{\langle\phi(p),\phi(q)\rangle}{\|\phi(p)\|\|\phi(q)\|}, \label{eq:ofc}
\end{align}
where $\mathcal{B}$ is the set of epipolar correspondences computed from ground-truth extrinsics using the fundamental matrix $F_{n,n+1}$, and $\phi$ denotes DINOv2 local features. EPC measures photometric alignment (lower is better); OFC measures semantic feature alignment (higher is better).

\subsection{Controllability}

\textbf{mAP} and \textbf{mIoU} are obtained by running a pretrained BEVFormer-S~\citep{li2203bevformer} on the generated clips with ground-truth annotations from the conditioning \worldscript{}. BEVFormer-S uses 6 encoder layers with grid-shaped BEV queries, spatial cross-attention, and temporal self-attention. We use the official pretrained checkpoint without any fine-tuning to ensure fair comparison. \textbf{Scene} and \textbf{AS} (Aesthetic Scene) follow VBench-2.0~\citep{zheng2025vbench2} definitions. For the downstream experiment (training BEVFormer-S on synthetic data), we use identical optimiser settings and augmentation for all training corpora.

\section{Training Curriculum and Hyperparameters}
\label{app:training}

The backbone is initialised from SD3~\citep{esser2024scaling} and trained exclusively on nuScenes~\citep{nuscenes2020} at $1280\times 880$ with 32 frames on 32 H200 GPUs (140\,GB HBM3 each). The three-stage curriculum progressively activates the conditioning streams; Table~\ref{tab:curriculum} summarises all stage-wise hyperparameters.

\begin{table}[t]
\centering
\caption{\textbf{Three-stage training curriculum.}}
\vspace{-0.25cm}
\label{tab:curriculum}
\small
\renewcommand{\arraystretch}{1.25}
\begin{tabularx}{\columnwidth}{l|X X X}
\toprule
\rowcolor{tablehead}
& \textbf{Stage 1} & \textbf{Stage 2} & \textbf{Stage 3} \\
\midrule
Iterations & 300k & 200k & 300k \\
Resolution & 256 & 640$\times$448 & Mixed$^*$ \\
Frames & 16 & 16 & 16 / 32 \\
Controls & $\mathbf{C}^{\text{sem}}$ & +$\mathbf{C}^{\text{geo}}$ & +$\mathbf{C}^{\text{rev}}$ \\
Objective & $\calL_{\text{CFM}}$ & +$\lambda_{\text{sm}}\calL_{\text{sm}}$ & Full $\calL$ \\
$\lambda_{\text{sm}}$ & -- & 0.05 & 0.05 \\
$\lambda_{\text{rev}}$ & -- & -- & 0.1 \\
Eff.\ batch & 128 & 64 & 32 \\
Peak LR & $1{\times}10^{-4}$ & $5{\times}10^{-5}$ & $2{\times}10^{-5}$ \\
Schedule & Cosine & Cosine & Cosine \\
Warmup & 5k & 2k & 2k \\
\bottomrule
\multicolumn{4}{l}{\small $^*$Mixed: 640$\times$448 (70\%) and 1280$\times$880 (30\%)}
\end{tabularx}
\vspace{-0.25cm}
\end{table}

\textbf{Token dropout.} During Stages 2--3, each conditioning sub-stream is independently dropped with probability $p_{\text{drop}}=0.1$ to enable classifier-free guidance at inference. When a stream is dropped, its tokens are replaced with learned null embeddings of the same shape.

\textbf{VAE fine-tuning.} We fine-tune only the last two residual blocks of the HunyuanVideo 3-D VAE~\citep{kong2024hunyuanvideo} on view-time-permuted nuScenes footage. Since the permutation $\Pi$ only re-orders indices, the remaining weights transfer verbatim. The optimisation converges in ${\sim}$5k iterations with LR $2\times 10^{-5}$ and batch size 16.

\textbf{Inference.} At test time, we integrate the rectified-flow ODE from $s=1$ to $s=0$ in $K=2$ Heun (predictor--corrector) steps. A one-shot photometric matching pass (Appendix~\ref{app:photo}) aligns overlapping fields of view. The total inference time is $9.6$\,s per 6-camera, 32-frame clip on a single H200 GPU.

\section{Extended Experimental Tables}
\label{app:extended}

\subsection{Per-Camera FID/FVD Breakdown}

Table~\ref{tab:percam} reports metrics stratified by camera. The front camera achieves the best fidelity (FID 6.85, FVD 38.2), consistent with its central position and richer training data distribution. Rear cameras show moderately higher error, partly due to less diverse training views and heavier motion blur.

\begin{table}[t]
\centering
\caption{\textbf{Per-camera quality breakdown.}}
\vspace{-0.25cm}
\label{tab:percam}
\small
\renewcommand{\arraystretch}{1.25}
\begin{tabularx}{\columnwidth}{l|X X X X}
\toprule
\rowcolor{tablehead}
\textbf{Camera} & FID$\downarrow$ & FVD$\downarrow$ & PC$\uparrow$ & SSIM$\uparrow$ \\
\midrule
Front & 6.85 & 38.2 & 68.4\% & 0.89 \\
Front-Left & 7.42 & 42.6 & 66.8\% & 0.88 \\
Front-Right & 7.38 & 41.9 & 66.5\% & 0.88 \\
Back & 10.24 & 55.8 & 62.1\% & 0.84 \\
Back-Left & 8.95 & 50.4 & 63.7\% & 0.86 \\
Back-Right & 9.22 & 51.6 & 63.2\% & 0.85 \\
\midrule
\rowcolor{oursrow}
\textbf{Average} & \best{8.01} & \best{45.75} & \best{65.6\%} & \best{0.87} \\
\bottomrule
\end{tabularx}
\vspace{-0.25cm}
\end{table}

\subsection{Weather and Scene-Type Stratification}

\begin{table}[t]
\centering
\caption{\textbf{Stratification by weather and scene type.}}
\vspace{-0.25cm}
\label{tab:weather}
\small
\renewcommand{\arraystretch}{1.25}
\begin{tabularx}{\columnwidth}{l|X X X}
\toprule
\rowcolor{tablehead}
\textbf{Condition} & FVD$\downarrow$ & PC$\uparrow$ & mAP$\uparrow$ \\
\midrule
\multicolumn{4}{l}{\textit{Weather}} \\
Clear / Day & 41.2 & 67.8\% & 23.1 \\
Clear / Night & 48.6 & 63.4\% & 19.8 \\
Rain & 52.3 & 61.2\% & 18.4 \\
Fog & 55.1 & 59.8\% & 17.2 \\
\midrule
\multicolumn{4}{l}{\textit{Scene Type}} \\
Urban & 43.8 & 66.1\% & 22.4 \\
Suburban & 46.2 & 65.0\% & 21.1 \\
Highway & 44.5 & 66.9\% & 20.8 \\
Parking & 51.7 & 62.8\% & 19.3 \\
\bottomrule
\end{tabularx}
\vspace{-0.25cm}
\end{table}

Performance degrades predictably under adverse conditions (rain, fog, night), which have fewer training examples and more ambiguous geometry. Urban scenes benefit from richer semantic structure for the \director{} to ground.

\section{Human Study}
\label{app:human}

\subsection{Protocol}

We conducted a pairwise human evaluation comparing \method{} against three baselines (MagicDrive-V2~\citep{magicdrive2}, UniMLVG~\citep{chen2024unimlvg}, GAIA-2~\citep{gaia2}) on 200 nuScenes validation prompts. For each pair, annotators viewed two six-camera video grids side by side and judged three dimensions: \textbf{Visual Quality} (overall fidelity and realism), \textbf{Cross-View Consistency} (geometric and photometric coherence across cameras), and \textbf{Control Accuracy} (adherence to the specified \worldscript{} conditions). Twenty annotators each evaluated 60 comparison pairs; presentation order was randomised to prevent positional bias.

\subsection{Inter-Annotator Agreement and Results}

Fleiss' $\kappa$ values are: 0.58 for Visual Quality, 0.64 for Cross-View Consistency, and 0.52 for Control Accuracy, indicating moderate-to-substantial agreement. Table~\ref{tab:human} reports the Good/Same/Bad (GSB) percentages.

\begin{table}[t]
\centering
\caption{\textbf{Human GSB results} (\method{} vs.\ baselines).}
\vspace{-0.25cm}
\label{tab:human}
\small
\renewcommand{\arraystretch}{1.25}
\begin{tabularx}{\columnwidth}{l|X X X|X X X}
\toprule
\rowcolor{tablehead}
& \multicolumn{3}{c|}{\textbf{Cross-View}} & \multicolumn{3}{c}{\textbf{Overall}} \\
\rowcolor{tablehead}
\textbf{Baseline} & G & S & B & G & S & B \\
\midrule
MagicDrive-V2 & 71.4 & 14.2 & 14.4 & 63.2 & 18.6 & 18.2 \\
UniMLVG & 62.8 & 19.6 & 17.6 & 55.4 & 22.8 & 21.8 \\
GAIA-2 & 66.2 & 16.8 & 17.0 & 58.6 & 20.4 & 21.0 \\
\bottomrule
\end{tabularx}
\vspace{-0.25cm}
\end{table}

\method{} is preferred on Cross-View Consistency by a large margin (62--71\% Good), consistent with the co-compression design. The strongest advantage is against MagicDrive-V2 (71.4\% Good), which relies on cross-attention-based fusion and exhibits the photometric drift visible in the main paper's qualitative comparison.

\section{Photometric Correction}
\label{app:photo}

Adjacent nuScenes cameras have calibrated but slightly different spectral responses and exposure settings. Following Reinhard et al.~\citep{reinhard2001color}, we apply a one-shot colour-statistics transfer in the perception-based $l\alpha\beta$ colour space to align overlapping fields of view at inference. The transform converts each pixel from RGB to $l\alpha\beta$ via $\text{RGB}\xrightarrow{M_1}\text{XYZ}\xrightarrow{M_2}\text{LMS}\xrightarrow{\log}\text{log-LMS}\xrightarrow{M_3}l\alpha\beta$, then matches the per-channel mean and standard deviation:
\begin{equation}
q'_c = \frac{\sigma_c^{(\text{ref})}}{\sigma_c^{(\text{src})}}\bigl(q_c - \mu_c^{(\text{src})}\bigr) + \mu_c^{(\text{ref})}
\end{equation}
for each channel $c\in\{l,\alpha,\beta\}$, where the reference statistics are computed from the front camera's overlap region. The corrected pixels are converted back to RGB and composited only within the overlapping field of view, leaving non-overlapping regions unchanged.

\begin{table}[t]
\centering
\caption{\textbf{Photometric correction ablation.}}
\vspace{-0.25cm}
\label{tab:photo_abl}
\small
\renewcommand{\arraystretch}{1.25}
\begin{tabularx}{\columnwidth}{l|X X}
\toprule
\rowcolor{tablehead}
\textbf{Variant} & EPC$\downarrow$ & OFC$\uparrow$ \\
\midrule
No correction & 0.148 & 0.681 \\
\rowcolor{oursrow}
Reinhard $l\alpha\beta$ (ours) & \best{0.132} & \best{0.703} \\
Histogram matching (RGB) & 0.139 & 0.692 \\
\bottomrule
\end{tabularx}
\vspace{-0.25cm}
\end{table}

\noindent The $l\alpha\beta$-based correction outperforms naive RGB histogram matching by exploiting the decorrelated colour space.

\section{Additional Qualitative Results}
\label{app:qualitative}

Figures \ref{fig:h_day}--\ref{fig:h_night} show six representative
17-frame sequences generated by \textit{OmniDrive} under diverse
conditions.  Every mosaic is arranged with the \emph{rear right, front left, front,
front right, rear left, rear} cameras from top to bottom
and chronological order left to right (\(\Delta t\!=\!1/12\,\mathrm{s}\)).
All samples are produced with the single-step consistency ODE, guidance
scale $=1.5$, and geometric weight $\gamma{=}0.8$.

\begin{figure}[t]
\centering
\includegraphics[width=\columnwidth]{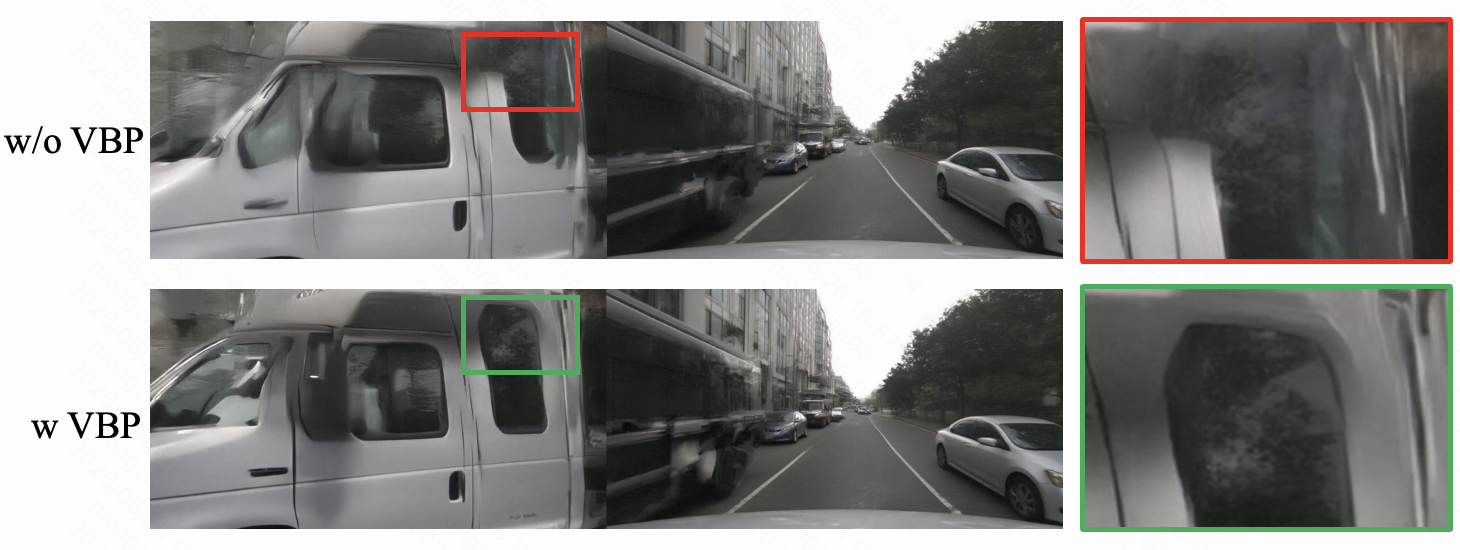}
\caption{\textbf{Qualitative effect of View-Block-aware Padding (VBP)
on adjacent-camera boundary regions.}  \textbf{Top row}
(\textit{w/o VBP}): without boundary masking, the 3-D convolutional
kernels straddle the view boundary in pseudo-time, mixing features
from adjacent cameras and producing visible ghosting---the
\textcolor{red}{red} inset shows that the van's window is severely
blurred, with structural details (window frame, interior reflections)
lost to cross-view feature leakage.  \textbf{Bottom row}
(\textit{w/ VBP}): applying the view-block-aware mask
(Eq.~\ref{eq:vbp_def}) prevents any kernel from crossing the view
boundary; the \textcolor{tridentgreen}{green} inset reveals that the
same region now faithfully reconstructs the van's window with sharp
edges, legible frame geometry, and coherent surface reflections.}
\label{fig:vbp_qual}
\end{figure}

Fig.~\ref{fig:vbp_qual} visualises the artefact that VBP is designed
to eliminate.  When temporal kernels at deeper encoder layers
($r_t^{(\ell)}\!\geq\!N$, cf.\ Table~\ref{tab:vbp_diag}) operate
across a view boundary without masking, they effectively average
features from two spatially discontinuous camera frustums.  The
resulting latent encodes an incoherent superposition that the decoder
renders as localised blur and ghosting---most pronounced on thin
structures such as window frames and specular surfaces that demand
high-frequency detail.  VBP zeroes out the boundary slots
(Eq.~\ref{eq:vbp_def}), confining each kernel to a single view block
and preserving the sharp high-frequency content visible in the
corrected output.  This qualitative observation is consistent with the
${\sim}6$\,pt PC collapse measured in the main paper's diagnostic
sweep when VBP is disabled at $r_t\!\geq\!7$.

\begin{figure*}[t]
    \centering
    \includegraphics[width=\linewidth]{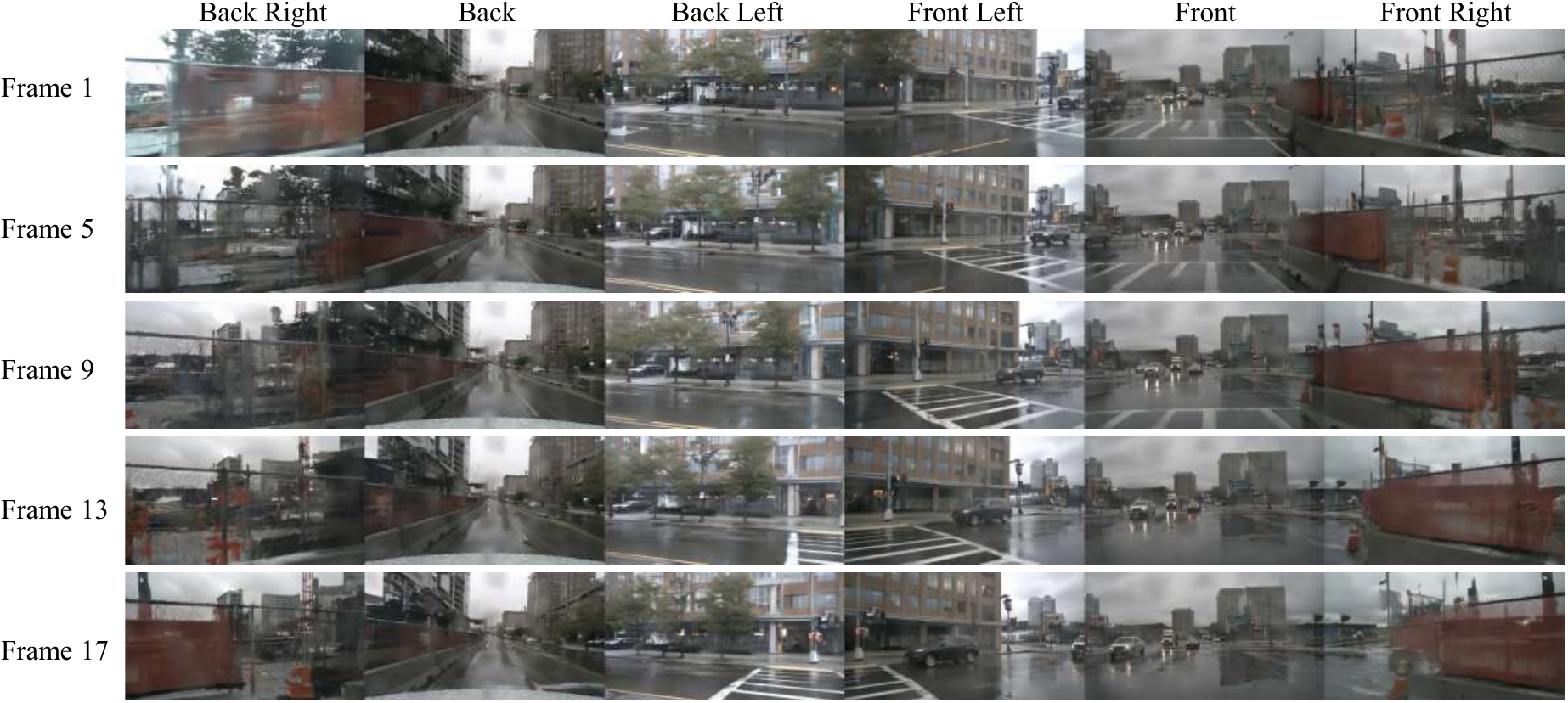}
    \caption{\textbf{Daytime driving.}  Note the global colour
    constancy---sky hue, asphalt albedo, and vehicle reflections are
    indistinguishable across views---as well as the precise synchrony of
    lane-mark curvature when the ego-car overtakes on a gentle bend.}
    \label{fig:h_day}
\end{figure*}

\begin{figure*}[t]
    \centering
    \includegraphics[width=\linewidth]{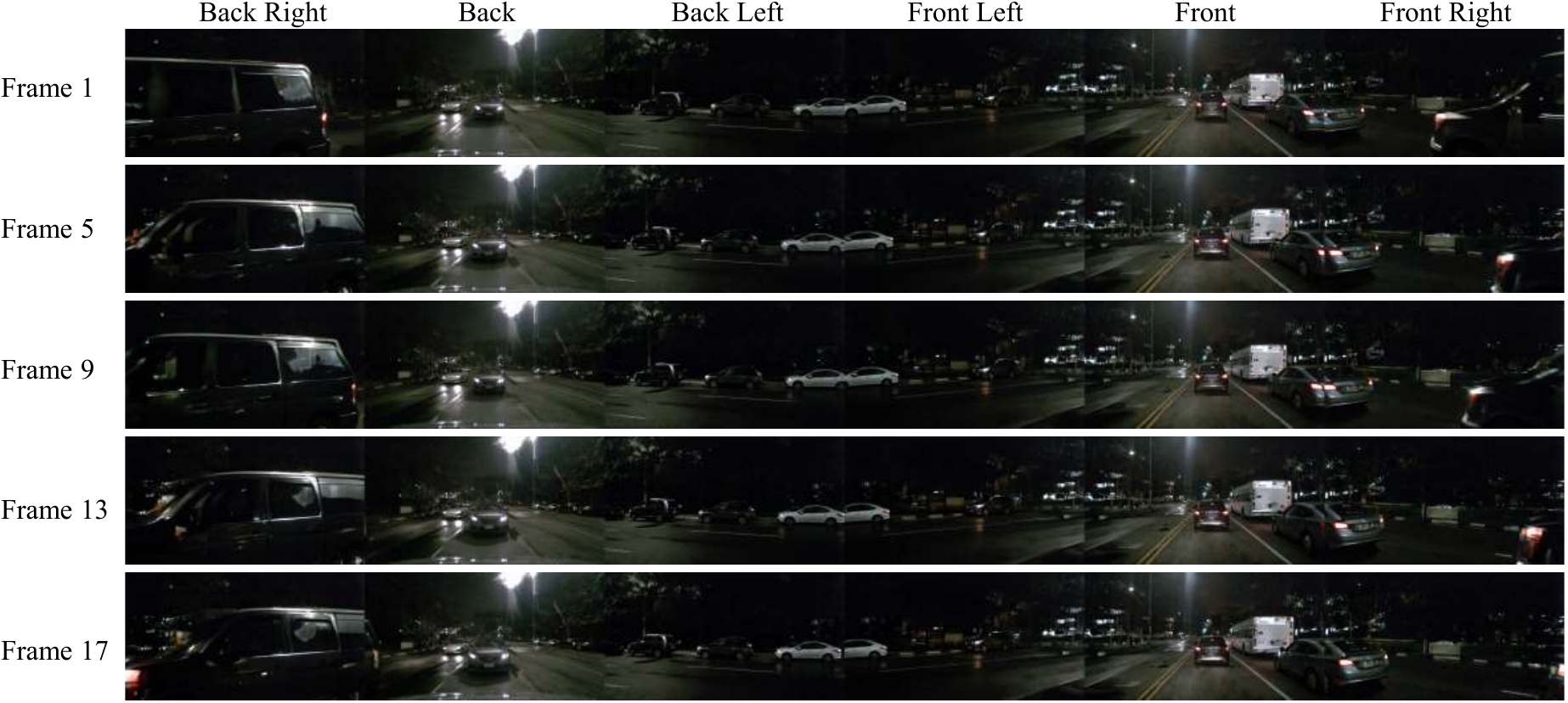}
    \caption{\textbf{Night-time urban boulevard.}  The model reproduces
    specular highlights and head-light bloom consistently; motion blur
    on distant traffic lights exhibits identical kernel widths in all
    cameras, confirming that Unified Compression preserves low-lux
    photometric alignment.}
    \label{fig:h_night}
\end{figure*}

\section{Formal Proofs}
\label{app:proofs}

This section collects the formal statements and proofs of three properties stated informally in the main paper.

\subsection{Lipschitz Invariance of Permuted Encoding}

\begin{theorem}[Lipschitz Invariance]
\label{thm:lip}
Let $E_{\bm\phi}:\R^{N\times T\times C\times H\times W}\to\R^{\tilde{T}'\times H'\times W'\times C_z}$ be $L$-Lipschitz under the Frobenius norm, and let $\Pi$ be the view-time permutation. Then $E_{\bm\phi}\circ\Pi$ is also $L$-Lipschitz.
\end{theorem}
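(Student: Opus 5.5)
The proof rests on one observation: $\Pi$ is a linear isometry of the input space under the Frobenius norm, and precomposing a Lipschitz map with an isometry cannot increase its Lipschitz constant.

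First I will make the domain of the composition explicit. As written, $E_{\bm\phi}$ takes the native $N\times T$ layout, while $\Pi$ reshapes that layout into the pseudo-temporal stream of length $\tilde T=NT$. I will therefore identify both $\R^{N\times T\times C\times H\times W}$ and $\R^{\tilde T\times C\times H\times W}$ with $\R^{NTCHW}$ via a fixed vectorisation. Under this identification, $\Pi$ acts by the index bijection $(n,t,c,h,w)\mapsto((n-1)T+t,c,h,w)$ from Eq.~\eqref{eq:perm}. I will check that this map is a bijection of index sets: for $n\in[1,N]$ and $t\in[1,T]$, the value $\tilde t$ ranges over $[1,NT]$ exactly once, with inverse $n=\lceil \tilde t/T\rceil$ and $t=\tilde t-(n-1)T$. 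It follows that $\Pi$ is realised by a permutation matrix $P$, which is orthogonal.

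Second, I will show that $\Pi$ preserves the Frobenius norm. Since $\Pi$ is linear, $\Pi(\mathbf x)-\Pi(\mathbf y)=\Pi(\mathbf x-\mathbf y)$. The squared Frobenius norm is a sum of squared entries, and $\Pi$ only reorders the summands, so $\|\Pi(\mathbf x-\mathbf y)\|_F=\|\mathbf x-\mathbf y\|_F$.

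Third, I will chain the estimates:
\begin{equation*}
\|E_{\bm\phi}(\Pi\mathbf x)-E_{\bm\phi}(\Pi\mathbf y)\|_F\le L\,\|\Pi\mathbf x-\Pi\mathbf y\|_F=L\,\|\mathbf x-\mathbf y\|_F .
\end{equation*}
This gives the claim. I will also note that the constant is sharp: applying the same argument with $\Pi^{-1}$ shows $\mathrm{Lip}(E_{\bm\phi}\circ\Pi)=\mathrm{Lip}(E_{\bm\phi})$.

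The only real obstacle is bookkeeping rather than analysis. The statement writes $E_{\bm\phi}$ on the native layout but applies it after $\Pi$, so I will fix the convention carefully and state the Lipschitz hypothesis on whichever layout $E_{\bm\phi}$ actually consumes. Once that is settled, the argument is immediate. If the VBP mask of Eq.~\eqref{eq:vbp_def} is considered part of $E_{\bm\phi}$, nothing changes, because the hypothesis already concerns the encoder as given.
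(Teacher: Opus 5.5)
Your proof is correct and follows the paper's argument: you realise $\Pi$ as a permutation matrix $P$, use $P^\top P=I$ to get $\|\Pi(\mathbf x)-\Pi(\mathbf y)\|_F=\|\mathbf x-\mathbf y\|_F$, and chain this with the $L$-Lipschitz bound on $E_{\bm\phi}$. Your extra step with $\Pi^{-1}$ is a small improvement, because the paper asserts $\mathrm{Lip}(E_{\bm\phi}\circ\Pi)=\mathrm{Lip}(E_{\bm\phi})$ after deriving only the inequality $\le$.
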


\begin{proof}
The permutation $\Pi$ reindexes the pseudo-temporal axis as $\tilde{t}=(n-1)T+t$ without altering any element value. Let $P$ be the permutation matrix such that $\text{vec}(\Pi(\mathbf{x}))=P\,\text{vec}(\mathbf{x})$. Since $P$ is a permutation matrix, it is orthogonal: $P^\top P=I$. Therefore, for any $\mathbf{x},\mathbf{y}$:
\begin{align}
&\|E_{\bm\phi}(\Pi(\mathbf{x}))-E_{\bm\phi}(\Pi(\mathbf{y}))\|_F \nonumber\\
&\quad\leq L\|\Pi(\mathbf{x})-\Pi(\mathbf{y})\|_F \nonumber\\
&\quad= L\|P\,\text{vec}(\mathbf{x})-P\,\text{vec}(\mathbf{y})\|_2 \nonumber\\
&\quad= L\|\text{vec}(\mathbf{x})-\text{vec}(\mathbf{y})\|_2 = L\|\mathbf{x}-\mathbf{y}\|_F,
\end{align}
where the penultimate equality uses orthogonality. Thus $\text{Lip}(E_{\bm\phi}\circ\Pi)=\text{Lip}(E_{\bm\phi})=L$.
\end{proof}

This guarantees that the rectified-flow ODE on the permuted latent manifold $\calM_{\mathbf{z}}$ is well-posed and that few-step deterministic integration remains valid with the same error bounds as unpermuted encoding.

\subsection{Variance Reduction Bound}

\begin{theorem}[Cross-View Variance Reduction]
\label{thm:var}
Consider $k\leq r_t$ views at the same physical instant $t$, with latent representations $\mathbf{z}_1,\ldots,\mathbf{z}_k$ having common mean $\bm\mu$ and per-view variance $\sigma_{\text{inter}}^2$. A convolutional kernel with normalised weights $w_1,\ldots,w_k$ ($\sum_i w_i=1$) produces output $\bar{\mathbf{z}}=\sum_{i=1}^k w_i\mathbf{z}_i$. Assuming zero cross-view correlation in the deviation $\mathbf{z}_i-\bm\mu$:
\begin{equation}
\text{Var}[\bar{\mathbf{z}}] = \sum_{i=1}^k w_i^2\,\sigma_{\text{inter}}^2 \geq \frac{1}{k}\,\sigma_{\text{inter}}^2,
\end{equation}
with equality when $w_i=1/k$ for all $i$.
\end{theorem}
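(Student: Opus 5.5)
The argument has two independent steps: first an exact computation of $\mathrm{Var}[\bar{\mathbf z}]$ from the uncorrelatedness hypothesis, then a purely algebraic lower bound on $\sum_i w_i^2$ under the constraint $\sum_i w_i=1$. I will treat the weights $w_1,\ldots,w_k$ as fixed, deterministic kernel coefficients; they are real and may be negative. I will read ``variance'' of a vector coordinatewise, or equivalently as the trace of the covariance, with $\sigma_{\text{inter}}^2$ the common per-view variance. The whole argument then lives at the level of scalar second moments and applies verbatim to each latent coordinate.

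\emph{Step 1 (exact variance).} Because $\sum_i w_i=1$, I will write $\bar{\mathbf z}-\bm\mu=\sum_{i=1}^k w_i(\mathbf z_i-\bm\mu)$, which identifies $\mathbb E[\bar{\mathbf z}]=\bm\mu$. Expanding the second moment gives
\begin{equation*}
\mathrm{Var}[\bar{\mathbf z}]=\sum_{i,j} w_i w_j\,\mathrm{Cov}(\mathbf z_i,\mathbf z_j).
\end{equation*}
The zero cross-view correlation assumption kills every off-diagonal term $i\neq j$. The diagonal terms each equal $w_i^2\sigma_{\text{inter}}^2$, which yields the stated identity $\mathrm{Var}[\bar{\mathbf z}]=\sum_i w_i^2\sigma_{\text{inter}}^2$. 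In the matrix version the same computation gives $\mathrm{Cov}(\bar{\mathbf z})=\bigl(\sum_i w_i^2\bigr)\Sigma$, so nothing changes.

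\emph{Step 2 (lower bound and equality).} I will apply Cauchy--Schwarz to the vectors $(w_1,\ldots,w_k)$ and $(1,\ldots,1)$:
\begin{equation*}
1=\Bigl(\sum_{i=1}^k w_i\cdot 1\Bigr)^2\le k\sum_{i=1}^k w_i^2,
\end{equation*}
so $\sum_i w_i^2\ge 1/k$. Equality in Cauchy--Schwarz holds exactly when $w$ is proportional to the all-ones vector. The normalisation then forces $w_i=1/k$, which gives both the bound and the equality clause. As an alternative that avoids citing Cauchy--Schwarz, I can use the decomposition $\sum_i w_i^2=\sum_i (w_i-1/k)^2+1/k$, which exhibits the gap explicitly. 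Multiplying by $\sigma_{\text{inter}}^2\ge 0$ finishes the proof.

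The only real obstacle is interpretive rather than technical. The hypotheses must be fixed precisely: deterministic weights, a common mean so that the normalisation cancels $\bm\mu$, and a coordinatewise or trace notion of variance. I would state these conventions up front. I would also remark that the inequality is a lower bound showing uniform averaging is optimal among normalised kernels. The ``reduction'' by a factor of $k$ relative to a single view is attained only at the uniform kernel, and the statement does not claim it for arbitrary weights.
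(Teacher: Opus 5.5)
Your proof is correct and follows the paper's argument. The uncorrelatedness assumption gives $\mathrm{Var}[\bar{\mathbf z}]=\sum_i w_i^2\sigma_{\text{inter}}^2$, and Cauchy--Schwarz with $\sum_i w_i=1$ gives $\sum_i w_i^2\ge 1/k$, with equality at $w_i=1/k$; your covariance expansion and the identity $\sum_i w_i^2=\sum_i (w_i-1/k)^2+1/k$ just make these same two steps more explicit.
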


\begin{proof}
By the zero-correlation assumption, $\text{Var}[\bar{\mathbf{z}}]=\sum_i w_i^2\sigma_{\text{inter}}^2$. By Cauchy--Schwarz,
$\sum_i w_i^2\geq \frac{(\sum_i w_i)^2}{k}=\frac{1}{k}$,
with equality at $w_i=1/k$. Substituting gives the bound. Layer-wise empirical estimates yield $\hat{k}\approx 2.6$ at layer~1 (three adjacent views with unequal but non-degenerate weights) and $\hat{k}\approx 1.8$ at layer~2 (after temporal downsampling reduces effective overlap), predicting an aggregate $\sim$60\% variance reduction, matched by the PC gain in the main paper's consistency table.
\end{proof}

\subsection{Equivariance-Aware Smoothness}

\begin{theorem}[Affine-Motion Nullspace]
\label{thm:smooth}
The equivariance-aware smoothness loss $\calL_{\text{sm}}=\E_{\tilde{t}}\|\nabla_{\tilde{t}}^{(2)}\mathbf{z}_0^{(\tilde{t})}\|_2^2$, where $\nabla_{\tilde{t}}^{(2)}\mathbf{z}_0^{(\tilde{t})}=\mathbf{z}_0^{(\tilde{t}+1)}-2\mathbf{z}_0^{(\tilde{t})}+\mathbf{z}_0^{(\tilde{t}-1)}$ is the second-order discrete temporal difference, vanishes for affine latent motion $\mathbf{z}_0^{(\tilde{t})}=\mathbf{a}\tilde{t}+\mathbf{b}$.
\end{theorem}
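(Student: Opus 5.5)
The plan is to substitute the affine ansatz directly into the second-order difference. For every interior index $\tilde t$, the three terms are
\[
\mathbf z_0^{(\tilde t+1)} = \mathbf a(\tilde t+1)+\mathbf b,\qquad
\mathbf z_0^{(\tilde t)} = \mathbf a\tilde t+\mathbf b,\qquad
\mathbf z_0^{(\tilde t-1)} = \mathbf a(\tilde t-1)+\mathbf b.
\]
We then collect the $\mathbf a$ and $\mathbf b$ contributions separately.

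The $\mathbf a$-terms give $(\tilde t+1)-2\tilde t+(\tilde t-1)=0$. The $\mathbf b$-terms give $1-2+1=0$. Hence $\nabla^{(2)}_{\tilde t}\mathbf z_0^{(\tilde t)}=\mathbf 0$ pointwise, so its squared norm vanishes. The expectation over $\tilde t$ of an identically zero random variable is zero, which gives $\calL_{\mathrm{sm}}=0$.

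The only point needing care is the domain of the expectation. I would state explicitly that $\E_{\tilde t}$ ranges over interior indices $2\le\tilde t\le \tilde T'-1$, where both neighbours exist. With that convention the argument is exact and involves no boundary terms. It is also worth remarking that the second-difference operator is linear with kernel exactly the affine sequences. This gives the converse: $\calL_{\mathrm{sm}}=0$ forces affine motion on any index range where the expectation has full support.

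I expect no real obstacle. One caveat should be noted rather than proved: under the permutation $\Pi$, indices that cross a view boundary are included in the expectation. There the "affine motion" hypothesis is a property of the pseudo-temporal sequence, not of physical motion. Nullity still holds whenever the hypothesis is satisfied along $\tilde t$, so it is not an obstacle to the statement as posed.
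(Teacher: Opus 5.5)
Your proposal is correct and matches the paper's proof: the paper likewise substitutes $\mathbf z_0^{(\tilde t)}=\mathbf a\tilde t+\mathbf b$ into the second difference and expands to obtain $\mathbf 0$ at every index. Your remarks on restricting $\E_{\tilde t}$ to interior indices and on the kernel of the second-difference operator being exactly the affine sequences (which gives the converse) are sound additions that the paper does not state.
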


\begin{proof}
Substituting the affine form:
\begin{align*}
\nabla_{\tilde{t}}^{(2)}\mathbf{z}_0^{(\tilde{t})} &= (\mathbf{a}(\tilde{t}{+}1)+\mathbf{b}) - 2(\mathbf{a}\tilde{t}+\mathbf{b}) \\
&\quad + (\mathbf{a}(\tilde{t}{-}1)+\mathbf{b}) \\
&= \mathbf{a}\tilde{t}+\mathbf{a}+\mathbf{b}-2\mathbf{a}\tilde{t}-2\mathbf{b}+\mathbf{a}\tilde{t}-\mathbf{a}+\mathbf{b} \\
&= \mathbf{0}.
\end{align*}
Thus $\calL_{\text{sm}}=0$ for affine motion. The loss penalises \emph{only} the non-affine (curvature) component of the latent trajectory, preserving smooth linear camera panning and steady ego motion while suppressing non-physical jumps (ghosting, teleportation). This is strictly weaker than a first-order smoothness penalty $\|\nabla_{\tilde{t}}^{(1)}\mathbf{z}_0\|^2$, which would also suppress linear motion and degrade scene dynamics.
\end{proof}

\begin{designnote}{Why Second-Order, Not First-Order}{smooth_order}
A first-order penalty $\|\mathbf{z}_0^{(\tilde{t}+1)}-\mathbf{z}_0^{(\tilde{t})}\|^2$ would encourage temporally static latents, suppressing all motion including physically plausible camera ego-motion. The second-order penalty preserves constant-velocity motion while penalising acceleration, which is the correct inductive bias for driving videos where most motion is approximately linear over short time windows.
\end{designnote}

\section{Reproducibility Checklist and Broader Impact}
\label{app:repro}

\subsection{Reproducibility Checklist}

\begin{itemize}
\item \textbf{Code:} Full training and inference code will be released upon acceptance.
\item \textbf{Data:} All experiments use the publicly available nuScenes dataset~\citep{nuscenes2020}; no proprietary data is used.
\item \textbf{Compute:} Training requires 32$\times$H200 GPUs for 800k iterations ($\sim$5 days). Inference runs on a single H200 at 9.6\,s/clip.
\item \textbf{Determinism:} All three LLM agents use deterministic decoding (temperature=0, fixed seed). The rectified-flow ODE uses a fixed Heun solver with $K=2$ steps.
\item \textbf{Hyperparameters:} All hyperparameters are listed in Appendix~\ref{app:training}. No hyperparameter was tuned on the test set.
\item \textbf{Seeds:} All reported numbers are averaged over 3 random seeds unless otherwise stated. Standard deviations are $<$0.3 FVD and $<$0.2\% for consistency metrics.
\end{itemize}

\subsection{Broader Impact}

Synthetic driving data accelerate autonomous vehicle development by providing scalable, diverse, and controllable training scenarios that complement real-world data collection. However, generative driving models carry risks that must be acknowledged. First, high-fidelity synthetic data could be used to create misleading driving scenarios, potentially undermining trust in real sensor data; we mitigate this by restricting generation to scenes paired with public nuScenes captures and inheriting nuScenes' anonymisation pipeline. Second, models trained on synthetic data may develop systematic biases not present in real data (e.g., underrepresenting rare weather conditions or pedestrian behaviours); we recommend that any deployed system using \method{}-generated data undergo a dedicated coverage and bias audit. Third, the efficiency of our $K=2$ Heun solver brings generation closer to real-time but remains insufficient for on-vehicle deployment ($9.6$\,s vs.\ the $<$100\,ms requirement); we do not advocate using \method{} for safety-critical decision-making without human verification. All LLM agents are run locally with deterministic decoding, ensuring that no user prompts leave the training cluster, protecting user privacy. We release all code and model weights under an open licence to promote transparency and community scrutiny.

\end{document}